\newcommand{\alglinelabel}{%
  \addtocounter{ALC@line}{-1}
  \refstepcounter{ALC@line}
  \label
}
\newcommand{\calA}{\ensuremath{\mathcal{A}}}
\newcommand{\calD}{\ensuremath{\mathcal{D}}}
\newcommand{\calM}{\ensuremath{\mathcal{M}}}
\newcommand{\calO}{\ensuremath{\mathcal{O}}}
\newcommand{\calP}{\ensuremath{\mathcal{P}}}
\newcommand{\calQ}{\ensuremath{\mathcal{Q}}}
\newcommand{\calW}{\ensuremath{\mathcal{W}}}
\newtheorem{lemma}{Lemma}[section]
\newtheorem{theorem}[lemma]{Theorem}
\newtheorem{definition}[lemma]{Definition}
\newcommand{\bas}{\mathsf{BAS}}
\newcommand{\cov}[2]{\mathsf{Cov}\left(#1,#2\right)}
\newcommand{\eps}{\varepsilon}
\newcommand{\gumbel}[1]{\mathsf{Gumbel}\left(#1\right)}
\newcommand{\indic}[1]{\mathbbm{1}_{#1}}
\newcommand{\kalg}{\mathsf{DPKendall}}
\newcommand{\kprefix}{\mathsf{K}\text{-}}
\newcommand{\lap}[1]{\mathsf{Lap}\left(#1\right)}
\newcommand{\lprefix}{\mathsf{L}\text{-}}
\newcommand{\nondp}{\mathsf{NonDP}}
\renewcommand{\P}[2]{\mathbb{P}_{#1}\left[#2\right]}
\newcommand{\peel}{\mathsf{Peel}}
\newcommand{\safe}[1]{\mathtt{Safe}_{(#1)}}
\newcommand{\sign}[1]{\mathsf{sign}\left(#1\right)}
\newcommand{\sublasso}{\mathsf{SubLasso}}
\newcommand{\tukey}{\mathsf{Tukey}}
\newcommand{\unsafe}[1]{\mathtt{Unsafe}_{(#1)}}
\newcommand{\var}[1]{\mathsf{Var}\left(#1\right)}
\newcommand{\vol}{\mathsf{vol}}
\newcommand{\arxiv}[1]{#1}
\newcommand{\narxiv}[1]{}
\begin{document}
\title{Better Private Linear Regression Through \\ Better Private Feature Selection}

\author{Travis Dick \and Jennifer Gillenwater \and Matthew Joseph\thanks{\{tdick, jengi, mtjoseph\}@google.com.}}

\maketitle

\begin{abstract}
    Existing work on differentially private linear regression typically assumes that end users can precisely set data bounds or algorithmic hyperparameters. End users often struggle to meet these requirements without directly examining the data (and violating privacy). Recent work has attempted to develop solutions that shift these burdens from users to algorithms, but they struggle to provide utility as the feature dimension grows. This work extends these algorithms to higher-dimensional problems by introducing a differentially private feature selection method based on Kendall rank correlation. We prove a utility guarantee for the setting where features are normally distributed and conduct experiments across 25 datasets. We find that adding this private feature selection step before regression significantly broadens the applicability of ``plug-and-play'' private linear regression algorithms at little additional cost to privacy, computation, or decision-making by the end user.
\end{abstract}
\section{Introduction}
Differentially private~\cite{DMNS06} algorithms employ carefully calibrated randomness to obscure the effect of any single data point. Doing so typically requires an end user to provide bounds on input data to ensure the correct scale of noise. However, end users often struggle to provide such bounds without looking at the data itself~\cite{SSHSV22}, thus nullifying the intended privacy guarantee. This has motivated the development of differentially private algorithms that do not require these choices from users.

To the best of our knowledge, two existing differentially private linear regression algorithms satisfy this ``plug-and-play''  requirement: 1) the Tukey mechanism~\cite{AJRV23}, which combines propose-test-release with an exponential mechanism based on Tukey depth, and 2) Boosted AdaSSP~\cite{TAKRR+23}, which applies gradient boosting to the AdaSSP algorithm introduced by~\citet{W18}. We refer to these methods as, respectively, $\tukey$ and $\bas$. Neither algorithm requires data bounds, and both feature essentially one chosen parameter (the number of models $m$ for $\tukey$, and the number of boosting rounds $T$ for $\bas$) which admits simple heuristics without tuning. $\tukey$ obtains strong empirical results when the number of data points $n$ greatly exceeds the feature dimension $d$~\cite{AJRV23}, while $\bas$ obtains somewhat weaker performance on a larger class of datasets~\cite{TAKRR+23}.

Nonetheless, neither algorithm provides generally strong utility on its own. Evaluated over a collection of 25 linear regression datasets taken from~\citet{TAKRR+23}, $\tukey$ and $\bas$ obtain coefficient of determination $R^2 > 0$ on only four (see \cref{sec:experiments}); for context, a baseline of $R^2=0$ is achieved by the trivial constant predictor, which simply outputs the mean label. These results suggest room for improvement for practical private linear regression.

\subsection{Our Contributions}
\label{subsec:contributions}
We extend existing work on private linear regression by adding a preprocessing step that applies private feature selection. At a high level, this strategy circumvents the challenges of large feature dimension $d$ by restricting attention to $k \ll d$ carefully selected features. We initiate the study of private feature selection in the context of ``plug-and-play'' private linear regression and make two concrete contributions:

\begin{enumerate}
    \item We introduce a practical algorithm, $\kalg$, for differentially private feature selection (\cref{sec:feature_selection}). $\kalg$ uses Kendall rank correlation~\cite{K38} and only requires the user to choose the number $k$ of features to select. It satisfies $\eps$-DP and, given $n$ samples with $d$-dimensional features, runs in time $O(dkn\log(n))$ (\cref{thm:kalg}). We also provide a utility guarantee when the features are normally distributed (\cref{thm:kendall_utility}).
    \item We conduct experiments across 25 datasets (\cref{sec:experiments}), with $k$ fixed at $5$ and $10$. These compare $\tukey$ and $\bas$ without feature selection, with $\sublasso$ feature selection~\cite{KST12}, and with $\kalg$ feature selection. Using $(\ln(3), 10^{-5})$-DP to cover both private feature selection and private regression, we find at $k=5$ that adding $\kalg$ yields $R^2 > 0$ on 56\% of the datasets. Replacing $\kalg$ with $\sublasso$ drops the rate to 40\% of datasets, and omitting feature selection entirely drops it further to 16\%.
\end{enumerate}

In summary, we suggest that $\kalg$ significantly expands the applicability and utility of practical private linear regression.

\subsection{Related Work}
\label{subsec:related_work}
The focus of this work is practical private feature selection applied to private linear regression. We therefore refer readers interested in a more general overview of the private linear regression literature to the discussions of~\citet{AJRV23} and~\citet{TAKRR+23}. 

Several works have studied private sparse linear regression~\cite{KST12, ST13, JT14, TTZ15}. However,~\citet{JT14} and~\citet{TTZ15} require an $\ell_\infty$ bound on the input data, and the stability test that powers the feature selection algorithm of~\citet{ST13} requires the end user to provide granular details about the optimal Lasso model. These requirements are significant practical obstacles. An exception is the work of~\citet{KST12}. Their algorithm first performs feature selection using subsampling and aggregation of non-private Lasso models. This feature selection method, which we call $\sublasso$, only requires the end user to select the number of features $k$. To the selected features, \citet{KST12} then apply objective perturbation to privately optimize the Lasso objective. As objective perturbation requires the end user to choose parameter ranges and provides a somewhat brittle privacy guarantee contingent on the convergence of the optimization, we do not consider it here. Instead, our experiments combine $\sublasso$ feature selection with the $\tukey$ and $\bas$ private regression algorithms. An expanded description of $\sublasso$ appears in \cref{subsec:comparison_algos}.

We now turn to the general problem of private feature selection. There is a significant literature studying private analogues of the general technique of principal component analysis (PCA)~\cite{MM09, HR12, CSS12, KT13, DTTZ14, ADKMV19}. Unfortunately, all of these algorithms assume some variant of a bound on the row norm of the input data.~\citet{SCM14} studied private feature selection in the setting where features and labels are binary, but it is not clear how to extend their methods to the non-binary setting that we consider in this work. $\sublasso$ is therefore the primary comparison private feature selection method in this paper. We are not aware of existing work that studies private feature selection in the specific context of ``plug-and-play'' private linear regression.

Finally, private rank correlation has previously been studied by~\citet{KSSW16}. They derived a different, normalized sensitivity bound appropriate for their ``swap'' privacy setting and applied it to privately determine the causal relationship between two random variables.
\section{Preliminaries}
Throughout this paper, a database $D$ is a collection of labelled points $(x,y)$ where $x \in \mathbb{R}^d$, $y \in \mathbb{R}$, and each user contributes a single point. We use the ``add-remove'' form of differential privacy.
\begin{definition}[\cite{DMNS06}]
Databases $D,D'$ from data domain $\calD$ are \emph{neighbors} $D \sim D'$ if they differ in the presence or absence of a single record. A randomized mechanism $\calM:\calD \to \calO$ is \emph{$(\eps, \delta)$-differentially private} (DP) if for all $D \sim D'\in \calD$ and any $S\subseteq \calO$
\begin{equation*}
    \P{\calM}{\calM(D) \in S} \leq e^{\eps}\P{\calM}{\calM(D') \in S} + \delta.
\end{equation*}
\arxiv{When $\delta=0$, we say $\calM$ is $\eps$-DP.}
\end{definition}

We use basic composition to reason about the privacy guarantee obtained from repeated application of a private algorithm. More sophisticated notions of composition exist, but for our setting of relatively few compositions, basic composition is simpler and suffers negligible utility loss.

\begin{lemma}[\cite{DMNS06}]
\label{lem:basic_comp}
    Suppose that for $j \in [k]$, algorithm $\calA_j$ is $(\eps_j, \delta_j$)-DP. Then running all $k$ algorithms is $(\sum_{j=1}^k \eps_j, \sum_{j=1}^k \delta_j)$-DP.
\end{lemma}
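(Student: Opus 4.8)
The plan is to induct on $k$, reducing everything to the two-mechanism case. Treating $\calA_1,\dots,\calA_{k-1}$ as a single mechanism $\calB$ — which is $\big(\sum_{j<k}\eps_j,\sum_{j<k}\delta_j\big)$-DP by the inductive hypothesis — and composing it with $\calA_k=:\calC$, it suffices to prove that if $\calB$ is $(\eps_1,\delta_1)$-DP and $\calC$ is $(\eps_2,\delta_2)$-DP (for every fixed value of $\calB$'s output, which also covers the adaptive case where $\calC$ depends on that output, as in our feature-selection-then-regression pipeline), then the composition $\calM=(\calB,\calC)$ is $(\eps_1+\eps_2,\delta_1+\delta_2)$-DP. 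Since neighboring is symmetric, I only need the one-sided bound $\P{}{\calM(D)\in S}\le e^{\eps_1+\eps_2}\P{}{\calM(D')\in S}+\delta_1+\delta_2$. The one auxiliary fact I would establish first is a reformulation of DP via bounded test functions: $\calM$ is $(\eps,\delta)$-DP iff $\E{\calM(D)}{g}\le e^{\eps}\E{\calM(D')}{g}+\delta$ for every measurable $g:\calO\to[0,1]$. The forward direction is a layer-cake argument, writing $\E{\calM(D)}{g}=\int_0^1\P{}{g(\calM(D))>t}\,dt$ and integrating the event-wise DP bound over the superlevel sets $\{g>t\}$; the reverse direction is immediate on taking $g=\indic{S}$.

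For the two-mechanism case, fix neighbors $D\sim D'$ and an event $S\subseteq\calO_1\times\calO_2$, and write $S_{o_1}=\{o_2:(o_1,o_2)\in S\}$ for the slice. Conditioning on the first output gives $\P{}{\calM(D)\in S}=\E{o_1\sim\calB(D)}{f(o_1)}$, where $f(o_1):=\P{}{\calC(o_1,D)\in S_{o_1}}$ is a fixed $[0,1]$-valued function of $o_1$; define $h(o_1):=\P{}{\calC(o_1,D')\in S_{o_1}}$ analogously. The DP guarantee of $\calC$ (applied for each fixed $o_1$) gives the pointwise inequality $f\le e^{\eps_2}h+\delta_2$. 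When $\delta_1=\delta_2=0$ the argument now closes immediately, since the ratios simply multiply; the work is entirely in tracking the $\delta$ terms.

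The crux, and the step I expect to require the most care, is combining these bounds so that the $\delta$'s add without a multiplicative blow-up: a careless ordering yields a spurious factor such as $e^{\eps_1}\delta_2$ or $e^{\eps_2}\delta_1$ rather than the claimed $\delta_1+\delta_2$. The fix is to peel off $\delta_2$ \emph{first} and only then invoke the test-function reformulation. Concretely, split $f=g+(f-e^{\eps_2}h)_+$ with $g:=\min(f,e^{\eps_2}h)$; since $f-e^{\eps_2}h\le\delta_2$ pointwise and $\calB(D)$ is a probability measure, integrating the remainder contributes exactly $\delta_2$. The key observation is that $g\le f\le 1$, so $g$ is a legitimate test function, and applying the auxiliary fact to $\calB$ gives $\E{\calB(D)}{g}\le e^{\eps_1}\E{\calB(D')}{g}+\delta_1$ — with additive term $\delta_1$ rather than $e^{\eps_2}\delta_1$ \emph{precisely because} $g\le 1$. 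Finally $g\le e^{\eps_2}h$ yields $\E{\calB(D')}{g}\le e^{\eps_2}\E{\calB(D')}{h}=e^{\eps_2}\P{}{\calM(D')\in S}$. Chaining the three estimates gives $\P{}{\calM(D)\in S}\le e^{\eps_1+\eps_2}\P{}{\calM(D')\in S}+\delta_1+\delta_2$, which completes the two-mechanism case and hence the induction.
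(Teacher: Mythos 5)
Your proof is correct. Note, though, that the paper offers no proof of this lemma to compare against: it is stated as an imported result, cited to the differential privacy literature, and used as a black box. Judged on its own, your argument is the standard careful proof of basic composition — reduce to two mechanisms by induction, reformulate $(\eps,\delta)$-DP via $[0,1]$-valued test functions through the layer-cake identity, and split $f=\min(f,e^{\eps_2}h)+(f-e^{\eps_2}h)_+$ so that the $\delta_2$ slack is peeled off before invoking $\calB$'s guarantee. You correctly identify the one place where a naive argument loses a factor (producing $e^{\eps_1}\delta_2$ instead of $\delta_2$), and your fix — applying the test-function bound to $g=\min(f,e^{\eps_2}h)$, which is still bounded by $1$ — is exactly the right one. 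Your observation that the conditional formulation handles adaptive composition is also apt, and is in fact needed here, since the paper composes feature selection with a regression step that depends on the selected features. One trivial wording nit: integrating the remainder $(f-e^{\eps_2}h)_+$ against $\calB(D)$ contributes \emph{at most} $\delta_2$, not exactly $\delta_2$; the chain of inequalities is unaffected.
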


Both $\sublasso$ and $\kalg$ use a private subroutine for identifying the highest count item(s) from a collection, known as private top-$k$. Several algorithms for this problem exist~\cite{BLST10, DR19, GJMR22}. We use the pure DP ``peeling mechanism'' based on Gumbel noise~\cite{DR19}, as its analysis is relatively simple, and its performance is essentially identical to other variants for the relatively small $k$ used in this paper.
\begin{definition}[\cite{DR19}]
\label{def:gumbel}
    A Gumbel distribution with parameter $b$ is defined over $x \in \mathbb{R}$ by $\P{}{x;b} = \frac{1}{b} \cdot \exp\left(-\frac{x}{b} - e^{-x/b}\right)$. Given $c = (c_1, \ldots, c_d) \in \mathbb{R}^d$, $k \in \mathbb{N}$, and privacy parameter $\eps$, $\peel(c, k, \Delta_\infty, \eps)$ adds independent $\gumbel{\tfrac{2k \Delta_\infty}{\eps}}$ noise to each count $c_j$ and outputs the ordered sequence of indices with the largest noisy counts. 
\end{definition}

\begin{lemma}[\cite{DR19}]
\label{lem:peel_dp}
    Given $c = (c_1, \ldots, c_d) \in \mathbb{R}^d$ with $\ell_\infty$ sensitivity $\Delta_\infty$, $\peel(c, k, \Delta_\infty, \eps)$ is $\eps$-DP.
\end{lemma}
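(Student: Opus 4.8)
The plan is to reduce the claim to two standard ingredients: the equivalence between Gumbel-noise argmax selection and the exponential mechanism, and basic composition (\cref{lem:basic_comp}). Recall that the exponential mechanism with quality scores $q_j$, sensitivity $\Delta$, and parameter $\eps_0$ selects index $j$ with probability proportional to $\exp\left(\eps_0 q_j / (2\Delta)\right)$ and is $\eps_0$-DP. I would first dispose of the case $k=1$, where $\peel$ simply adds $\gumbel{2\Delta_\infty/\eps}$ noise to each $c_j$ and reports the single argmax.

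For the $k=1$ case I would invoke the \emph{Gumbel-max trick}: if independent $\gumbel{b}$ variables are added to scores $c_1, \dots, c_d$, then the probability that coordinate $j$ attains the maximum equals $\exp(c_j/b) \big/ \sum_{\ell} \exp(c_\ell / b)$. Matching $1/b = \eps_0 / (2\Delta_\infty)$ at the scale $b = 2\Delta_\infty/\eps$ identifies this exactly with the exponential mechanism using quality scores $c_j$ of $\ell_\infty$ sensitivity $\Delta_\infty$ and $\eps_0 = \eps$, hence $\eps$-DP. The factor of $2$ in the noise scale matches the factor of $2$ in the exponential mechanism and accounts for both the numerator and denominator of the privacy ratio shifting under a neighboring database, which is the correct accounting in the general add-remove setting where counts are not monotone.

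For general $k$, the key observation is that reporting the ordered top-$k$ coordinates from a \emph{single} draw of Gumbel noise is distributionally identical to running $k$ sequential rounds of the exponential mechanism, where each round selects one index proportional to $\exp(c_j/b)$ among the not-yet-selected coordinates and then removes it (the ``peeling'' view). This equivalence is the main technical step, and it rests on the memorylessness of the Gumbel distribution: conditioned on the identity of the current argmax, the relative order of the remaining noisy scores is distributed exactly as a fresh Gumbel-max competition among the surviving coordinates. I would establish it by induction on $k$, peeling off the top coordinate and applying this conditioning property to show that the residual problem is an independent instance on the remaining $d-1$ coordinates with the same noise scale.

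Given the equivalence, each of the $k$ peeling rounds is an exponential mechanism at scale $b = 2k\Delta_\infty/\eps$, hence has per-round privacy parameter $\eps_0 = 2\Delta_\infty/b = \eps/k$; the per-round quality scores are just the counts restricted to the surviving coordinates, whose $\ell_\infty$ sensitivity remains at most $\Delta_\infty$. Composing the $k$ rounds via \cref{lem:basic_comp} yields total privacy $k \cdot (\eps/k) = \eps$, proving that $\peel(c,k,\Delta_\infty,\eps)$ is $\eps$-DP. The one point to double-check is that releasing the full \emph{ordered} sequence rather than the unordered set costs nothing extra; this is automatic, since the sequential peeling view already produces precisely the ordered output.
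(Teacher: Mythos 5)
The paper does not prove this lemma; it is imported verbatim from \citet{DR19}, and your argument is essentially the proof given there: Gumbel-max equals the exponential mechanism, the ordered top-$k$ from a single noise draw equals $k$ rounds of sampling-without-replacement from the softmax (Plackett--Luce), and basic composition of $k$ rounds at $\eps/k$ each gives $\eps$-DP. The one imprecision is attributing the peeling equivalence to ``memorylessness of the Gumbel distribution''---Gumbel itself is not memoryless; the standard route is to map $\argmax_j (c_j + G_j)$ to $\argmin_j E_j$ with $E_j \sim \expo{e^{c_j/b}}$ and use memorylessness of the exponentials to restart the race after each selection---but you correctly flag this equivalence as the technical step to be established, and the rest of the accounting (per-round sensitivity still $\Delta_\infty$, the factor of $2$ for non-monotone scores, and the ordered output coming for free from the sequential view) is sound.
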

The primary advantage of $\peel$ over generic noise addition is that, although users may contribute to $d$ counts, the added noise only scales with $k$. We note that while $\peel$ requires an $\ell_\infty$ bound, neither $\kalg$ nor $\sublasso$ needs user input to set it: regardless of the dataset, $\kalg$'s use of $\peel$ has $\ell_\infty=3$ and $\sublasso$'s use has $\ell_\infty=1$ (see \cref{alg:kalg,alg:sublasso}).
\section{Feature Selection Algorithm}
\label{sec:feature_selection}
This section describes our feature selection algorithm, $\kalg$, and formally analyzes its utility. \cref{subsec:kendall} introduces and discusses Kendall rank correlation, \cref{subsec:algo} describes the full algorithm, and the utility result appears in \cref{subsec:utility}.

\subsection{Kendall Rank Correlation}
\label{subsec:kendall}
The core statistic behind our algorithm is Kendall rank correlation. Informally, Kendall rank correlation measures the strength of a monotonic relationship between two variables.
\begin{definition}[\cite{K38}]
\label{def:kendall}
    Given a collection of data points $(X,Y) = \{(X_1, Y_1), \ldots, (X_n, Y_n)\}$ and $i < i'$, a pair of observations $(X_i, Y_i), (X_{i'}, Y_{i'})$ is \emph{discordant} if $(X_i - X_{i'})(Y_i - Y_{i'}) < 0$. Given data $(X,Y)$, let $d_{X,Y}$ denote the number of discordant pairs. Then the \emph{empirical Kendall rank correlation} is
    \[
        \hat \tau(X,Y) \coloneqq \frac{n}{2} - \frac{2d_{X,Y}}{n-1}.
    \]
    For real random variables $X$ and $Y$, we can also define the \emph{population Kendall rank correlation} by
    \[
        \tau(X, Y) = \P{}{(X - X')(Y - Y') > 0} - \P{}{(X - X')(Y - Y') < 0}.
    \]
\end{definition}
Kendall rank correlation is therefore high when an increase in $X$ or $Y$ typically accompanies an increase in the other, low when an increase in one typically accompanies a decrease in the other, and close to 0 when a change in one implies little about the other. We typically focus on empirical Kendall rank correlation, but the population definition will be useful in the proof of our utility result.

Before discussing Kendall rank correlation in the context of privacy, we note two straightforward properties. First, for simplicity, we use a version of Kendall rank correlation that does not account for ties. We ensure this in practice by perturbing data by a small amount of continuous random noise. Second, $\tau$ has range $[-1,1]$, but (this paper's version of) $\hat \tau$ has range $[-n/2,n/2]$. This scaling does not affect the qualitative interpretation and ensures that $\hat \tau$ has low sensitivity\footnote{In particular, without scaling, $\hat \tau$ would be $\approx \tfrac{1}{n}$-sensitive, but $n$ is private information in add-remove privacy.}.
\begin{lemma}
\label{lem:kendall_sensitivity}
    $\Delta(\hat \tau) = 3/2$.
\end{lemma}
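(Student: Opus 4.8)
The plan is to bound the change in $\hat\tau$ when a single record is added to (or removed from) a database, and then to exhibit configurations that attain the bound. Since the add-remove relation is symmetric, it suffices to compare a database $D$ of size $n$ with $D' = D \cup \{(X_{n+1}, Y_{n+1})\}$ of size $n+1$. Let $d$ denote the number of discordant pairs among the $n$ points of $D$, and let $a$ denote the number of the $n$ newly formed pairs (the added point against each old point) that are discordant. Because we assume no ties, each new pair is either concordant or discordant, so $0 \le a \le n$, while $0 \le d \le \binom{n}{2}$.

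First I would write the two correlations directly from \cref{def:kendall}: $\hat\tau(D) = \tfrac{n}{2} - \tfrac{2d}{n-1}$ and $\hat\tau(D') = \tfrac{n+1}{2} - \tfrac{2(d+a)}{n}$. Subtracting and recombining the two appearances of $d$ via $\tfrac{1}{n-1} - \tfrac{1}{n} = \tfrac{1}{n(n-1)}$ gives the clean identity $\hat\tau(D') - \hat\tau(D) = \tfrac12 + \tfrac{2d}{n(n-1)} - \tfrac{2a}{n}$. The ranges of $d$ and $a$ then control each term: $\tfrac{2d}{n(n-1)} \in [0,1]$ and $\tfrac{2a}{n} \in [0,2]$, so the difference lies in $[\tfrac12 - 2,\; \tfrac12 + 1] = [-\tfrac32, \tfrac32]$, which proves $\Delta(\hat\tau) \le 3/2$ uniformly in $n$.

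For tightness I would give explicit datasets realizing each endpoint. Taking $D = \{(i,i)\}_{i=1}^n$ (perfectly concordant, $d = 0$) and adding a point with very large $X$-coordinate and very small $Y$-coordinate makes the new point discordant with every old point ($a = n$), yielding a difference of $-3/2$; mirroring this with a perfectly discordant $D$ ($d = \binom{n}{2}$) and a new point dominating all others ($a = 0$) yields $+3/2$. Hence $\Delta(\hat\tau) = 3/2$. The step I would be most careful about is the bookkeeping in the subtraction: because add-remove neighbors have different sizes, three effects contribute at once---the explicit $n/2$ prefactor shifting by $1/2$, the denominator changing from $n-1$ to $n$, and the $a$ newly created discordant pairs---and one must confirm that the worst case aligns all three rather than letting them partially cancel. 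Verifying that the two extremal configurations actually occur is precisely what rules out a smaller constant.
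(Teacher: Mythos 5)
Your proposal is correct and follows essentially the same route as the paper's proof: the same algebraic decomposition of the difference into the $\tfrac12$ size shift, the $\tfrac{2d}{n(n-1)}$ denominator term, and the $\tfrac{2a}{n}$ new-pair term, with the same range bounds and the same extremal configuration ($d=0$, new point discordant with everything) for tightness. The only cosmetic differences are the sign convention (you compute $\hat\tau(D')-\hat\tau(D)$ rather than the reverse) and that you exhibit both endpoints where the paper exhibits one, which is not needed since sensitivity concerns the absolute value.
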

\begin{proof}
    At a high level, the proof verifies that the addition or removal of a user changes the first term of $\hat \tau$ by at most 1/2, and the second term by at most 1.
    
    In more detail, consider two neighboring databases $(X,Y)$ and $(X',Y')$.
    Without loss of generality, we may assume that $(X,Y) = \{(X_1, Y_1), \ldots, (X_n, Y_n)\}$ and $(X', Y') = \{(X'_1, Y'_1), \ldots, (X'_{n+1}, Y'_{n+1})\}$ where for all $i \in [n]$ we have that $X'_i = X_i$ and $Y'_i = Y_i$.
    First, we argue that the number of discordant pairs in $(X',Y')$ cannot be much larger than in $(X,Y)$.
    By definition, we have that $d_{X',Y'} - d_{X,Y} = \sum_{j = 1}^n \indic{(X_j - X'_{n+1})(Y_j - Y'_{n+1}) < 0}$.
    In particular, this implies that $d_{X',Y'} - d_{X,Y} \in [0, n]$.
    
    We can rewrite the difference in Kendall correlation between $(X,Y)$ and $(X',Y')$ as follows:
    \begin{align*}
        \hat\tau(X,Y) - \hat\tau(X',Y')
        &= \frac{n}{2} - \frac{2d_{X,Y}}{n-1} 
         - \frac{n+1}{2} + \frac{2d_{X',Y'}}{n} \\
        &= 2\left(\frac{d_{X',Y'}}{n} - \frac{d_{X,Y}}{n-1}\right) 
         -\frac{1}{2} \\
        &= 2\left(\frac{d_{X',Y'} - d_{X,Y}}{n} + \frac{d_{X,Y}}{n} - \frac{d_{X,Y}}{n-1}  \right) - \frac{1}{2} \\
        &= 2 \frac{d_{X',Y'} - d_{X,Y}}{n}
         + 2d_{X,Y}\left(\frac{1}{n} - \frac{1}{n-1}\right)
         - \frac{1}{2} \\
        &= 2 \frac{d_{X',Y'} - d_{X,Y}}{n}
         - \frac{d_{X,Y}}{{n \choose 2}}
         - \frac{1}{2},
    \end{align*}
    where the final equality follows from the fact that $2(\frac{1}{n} - \frac{1}{n-1}) = -1/{n \choose 2}$.
    Using our previous calculation, the first term is in the range $[0,2]$ and, since $d_{X,Y} \in [0, {n \choose 2}]$, the second term is in the range $[-1,0]$. 
    It follows that
    \[
    -\frac{3}{2} = 0 - 1 - \frac{1}{2} \leq \hat \tau(X,Y) - \hat \tau(X', Y') \leq 2 - 0 - \frac{1}{2} = \frac{3}{2},
    \]
    and therefore $|\hat \tau(X,Y) - \hat \tau(X',Y')| \leq 3/2$.
    
    To show that the sensitivity is not smaller than $3/2$, consider neighboring databases $(X,Y)$ and $(X',Y')$ such that $d_{X,Y} = 0$ and $(X',Y')$ contains a new point that is discordant with all points in $(X,Y)$. Then $d_{X',Y'} = n$ while $d_{X,Y} = 0$. Then $\hat \tau (X,Y) - \hat \tau(X',Y') = 2 - 0 - 1/2 = 3/2$.
\end{proof}
Turning to privacy, Kendall rank correlation has two notable strengths. First, because it is computed entirely from information about the relative ordering of data, it does not require an end user to provide data bounds. This makes it a natural complement to private regression methods that also operate without user-provided data bounds. Second, Kendall rank correlation's sensitivity is constant, but its range scales linearly with $n$. This makes it easy to compute privately. A contrasting example is Pearson correlation, which requires data bounds to compute covariances and has sensitivity identical to its range. An extended discussion of alternative notions of correlation appears in \cref{sec:correlations}. 

Finally, Kendall rank correlation can be computed relatively quickly using a variant of merge sort.

\begin{lemma}[\cite{K66}]
\label{lem:kendall_time}
    Given collection of data points $(X,Y) = \{(X_1, Y_1), \ldots, (X_n, Y_n)\}$, $\hat \tau(X, Y)$ can be computed in time $O(n\log(n))$.
\end{lemma}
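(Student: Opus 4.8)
The plan is to observe that $\hat\tau(X,Y)$ is an affine function of the discordant-pair count $d_{X,Y}$, so the entire cost lies in computing $d_{X,Y}$: once we have it, evaluating $\hat\tau = \frac{n}{2} - \frac{2 d_{X,Y}}{n-1}$ takes $O(1)$ additional time. The naive method of iterating over all $\binom{n}{2}$ pairs costs $\Theta(n^2)$, so the goal is to count discordant pairs in $O(n \log n)$ via a reduction to counting inversions.

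First I would sort the $n$ points in increasing order of their $X$-coordinate, which costs $O(n \log n)$; call the reordered labels $Y_{(1)}, \dots, Y_{(n)}$. Since we assume no ties (guaranteed in practice by the continuous perturbation noted above, so that all $X_i$ and all $Y_i$ are distinct), after this sort any pair with indices $p < q$ satisfies $X_{(p)} < X_{(q)}$. Hence $(X_{(p)} - X_{(q)})(Y_{(p)} - Y_{(q)}) < 0$ holds exactly when $Y_{(p)} > Y_{(q)}$, i.e.\ exactly when $(p, q)$ is an inversion of the sequence $(Y_{(1)}, \dots, Y_{(n)})$. This establishes a bijection between discordant pairs of $(X,Y)$ and inversions of the permutation induced on the $Y$-values, so that $d_{X,Y}$ equals the number of such inversions.

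Next I would count inversions with a modified merge sort. Recursively sort the left and right halves of the $Y$-sequence while accumulating their inversion counts, then merge. The only new ingredient is that, during the merge, each time an element from the right half is emitted ahead of the remaining $r$ elements of the left half, those $r$ pairs are inversions spanning the two halves and are added to a running total. Each comparison in the merge contributes $O(1)$ bookkeeping, so the merge step remains linear, and the recurrence $T(n) = 2 T(n/2) + O(n)$ yields $T(n) = O(n \log n)$. Summing the within-half and cross-half contributions returns $d_{X,Y}$ exactly, after which $\hat\tau$ follows immediately.

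The only point requiring care — rather than a genuine obstacle — is the equivalence between discordant pairs and inversions, which hinges on the no-ties assumption; with ties one would need correction terms distinguishing concordant, discordant, and tied pairs, but the continuous-noise perturbation removes ties with probability one and makes the clean bijection valid. Beyond that, the argument is the textbook inversion-counting analysis, and the dominant $O(n \log n)$ cost is incurred equally by the initial sort-by-$X$ and by the merge-sort inversion count.
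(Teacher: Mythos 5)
Your argument is correct and is exactly the algorithm of Knight (1966) that the paper cites for this lemma (the paper gives no proof of its own): sort by $X$, then count discordant pairs as inversions of the induced $Y$-sequence via a modified merge sort. You also correctly flag that the clean bijection between discordant pairs and inversions relies on the no-ties assumption, which the paper enforces by continuous perturbation.
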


\subsection{$\kalg$}
\label{subsec:algo}
Having defined Kendall rank correlation, we now describe our private feature selection algorithm, $\kalg$. Informally, $\kalg$ balances two desiderata: 1) selecting features that correlate with the label, and 2) selecting features that do not correlate with previously selected features. Prioritizing only the former selects for redundant copies of a single informative feature, while prioritizing only the latter selects for features that are pure noise.

In more detail, $\kalg$ consists of $k$ applications of $\peel$ to select a feature that is correlated with the label and relatively uncorrelated with the features already chosen. Thus, letting $S_t$ denote the set of $t-1$ features already chosen in round $t$, each round attempts to compute
\begin{equation}
\label{eq:tau_later_score}
    \max_{j \not \in S_t} \left(|\hat \tau(X_j, Y)| - \frac{1}{t-1}\sum_{j' \in S_t} |\hat \tau(X_j, X_{j'})|\right).
\end{equation}
The $\tfrac{1}{t-1}$ scaling ensures that the sensitivity of the overall quantity remains fixed at $\tfrac{3}{2}$ in the first round and 3 in the remaining rounds. Note that in the first round we take second term to be 0, and only label correlation is considered.

\begin{algorithm}
    \begin{algorithmic}[1]
       \STATE {\bfseries Input:} Examples $D = \{(X_i, Y_i)\}_{i=1}^n$, number of selected features $k$, privacy parameter $\eps$
       \alglinelabel{algln:kalg_input}
       \FOR{$j=1, \ldots, d$} \alglinelabel{algln:kalg_j_loop}
        \STATE Compute $\hat \tau_j^Y = |\hat \tau(X_j, Y)|$
    \alglinelabel{algln:kalg_tau_Y}
       \ENDFOR
       \STATE Initialize $S = \emptyset$
       \STATE Initialize $\hat \tau = \hat \tau^Y \in \mathbb{R}^d$
       \STATE Initialize $\hat \tau^X = 0 \in \mathbb{R}^d$
       \FOR{$t=1, \ldots, k$}
       \alglinelabel{algln:kalg_t_loop}
        \STATE Set $\Delta_\infty = \tfrac{3}{2} + \tfrac{3}{2} \cdot \indic{t > 1}$
        \STATE Set $s_t = \peel\left(\hat \tau^Y - \tfrac{\hat \tau^X}{t-1}, 1, \Delta_\infty, \tfrac{\eps}{k}\right)$
        \STATE Expand $S = S \cup s_t$
        \STATE Update $\hat \tau_{s_t}^Y = -\infty$
        \FOR{$j \not \in S$} \alglinelabel{algln:kalg_j_loop2}
            \STATE Update $\hat \tau_j^X = \hat \tau_j^X - |\hat \tau(X_j, X_{s_t})|$
            \alglinelabel{algln:kalg_tau_update}
        \ENDFOR
       \ENDFOR
       \STATE Return $S$ \alglinelabel{algln:kalg_final}
    \caption{$\kalg(D, k, \eps)$}
    \label{alg:kalg}
    \end{algorithmic}
\end{algorithm}

Pseudocode for $\kalg$ appears in \cref{alg:kalg}. Its runtime and privacy are easy to verify.
\begin{theorem}
\label{thm:kalg}
    $\kalg$ runs in time $O(dkn\log(n))$ and satisfies $\eps$-DP.
\end{theorem}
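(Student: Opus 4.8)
The plan is to establish the runtime and the privacy guarantee separately, assembling the lemmas already in hand.

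For the runtime, I would first bound the initial loop (\cref{algln:kalg_j_loop}), which computes $\hat\tau(X_j, Y)$ for each of the $d$ features; by \cref{lem:kendall_time} each computation costs $O(n\log(n))$, for a total of $O(dn\log(n))$. I would then bound a single iteration of the main loop (\cref{algln:kalg_t_loop}): the lone $\peel$ call adds noise to and selects from $d$ scores in $O(d)$ time, while the update loop (\cref{algln:kalg_j_loop2}) performs at most $d$ Kendall computations at $O(n\log(n))$ each, giving $O(dn\log(n))$ per iteration. Multiplying by the $k$ iterations and adding the initial loop gives the claimed $O(dkn\log(n))$.

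For privacy, the key observation is that the only data-dependent values ever released are the outputs of the $k$ calls to $\peel$: the returned set $S$ is exactly the concatenation of these outputs, and the intermediate vectors $\hat\tau^Y, \hat\tau^X$ are only fed into $\peel$, never published. So it suffices to show each $\peel$ call is $(\eps/k)$-DP and then compose. By \cref{lem:peel_dp}, the $t$-th call is $(\eps/k)$-DP as long as the score vector it receives has $\ell_\infty$ sensitivity at most the $\Delta_\infty$ supplied, so the crux is verifying this per round. In round $t=1$ the score is $|\hat\tau(X_j, Y)|$, whose sensitivity is at most $3/2$ because $\hat\tau$ has sensitivity $3/2$ by \cref{lem:kendall_sensitivity} and $|\cdot|$ is $1$-Lipschitz, matching $\Delta_\infty = 3/2$. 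In round $t>1$ the score is the quantity in \eqref{eq:tau_later_score}, and I would bound its per-coordinate change under adding or removing one record by the triangle inequality: the label term contributes at most $3/2$, while the penalty $\frac{1}{t-1}\sum_{j'\in S_t}|\hat\tau(X_j, X_{j'})|$ is a sum of $t-1$ terms each of sensitivity $3/2$ scaled by $\frac{1}{t-1}$, hence contributes at most $\frac{1}{t-1}\cdot(t-1)\cdot\frac{3}{2} = \frac{3}{2}$, for a total of at most $3 = \Delta_\infty$.

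I would then finish with composition. Because round $t$ depends on the earlier rounds only through the selected set $S_t$, and the sensitivity bounds above hold uniformly over every possible $S_t$ (in particular, $|S_t| = t-1$ exactly, so the scaling cancels), each $\peel$ call is $(\eps/k)$-DP conditioned on any realization of the previous outputs. Adaptive basic composition (\cref{lem:basic_comp}) across the $k$ rounds then yields $k \cdot (\eps/k) = \eps$-DP. The step I expect to require the most care is the sensitivity accounting in the later rounds: the whole purpose of the $\frac{1}{t-1}$ scaling in \eqref{eq:tau_later_score} is to cancel the growth of the penalty sum so the sensitivity stays fixed at $3$ rather than scaling with the number of previously selected features, and I would make sure the term count is exactly $t-1$ so that this cancellation is exact.
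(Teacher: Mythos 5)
Your proposal is correct and follows essentially the same route as the paper: the runtime is bounded by the same decomposition (initial Kendall loop, per-round update loop, and the $O(d)$ cost of each $\peel$ call), and the privacy guarantee is obtained from \cref{lem:peel_dp} and \cref{lem:kendall_sensitivity} via composition over the $k$ calls. Your explicit per-round sensitivity accounting (the $\tfrac{1}{t-1}$ scaling canceling the $t-1$ penalty terms to keep $\Delta_\infty = 3$) is exactly the justification the paper states informally alongside \eqref{eq:tau_later_score} and leaves implicit in its proof.
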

\begin{proof}
    By \cref{lem:kendall_time}, each computation of Kendall rank correlation takes time $O(n\log(n))$, so Line \ref{algln:kalg_j_loop}'s loop takes time $O(dn\log(n))$, as does each execution of Line \ref{algln:kalg_j_loop2}'s loop. Each call to $\peel$ requires $O(d)$ samples of Gumbel noise and thus contributes $O(dk)$ time overall. The loop in Line \ref{algln:kalg_t_loop} therefore takes time $O(dkn\log(n))$. The privacy guarantee follows from \cref{lem:peel_dp,lem:kendall_sensitivity}.
\end{proof}

For comparison, standard OLS on $n$ samples of data with $d$ features requires time $O(d^2n)$; $\kalg$ is asymptotically no slower as long as $n \leq O(2^{d/k})$. Since we typically take $k \ll d$, $\kalg$ is computationally ``free'' in many realistic data settings.

\subsection{Utility Guarantee}
\label{subsec:utility}
The proof of $\kalg$'s utility guarantee combines results about population Kendall rank correlation (\cref{lem:pop_tau}), empirical Kendall rank correlation concentration (\cref{lem:pop_sample_tau}), and the accuracy of $\peel$ (\cref{lem:gumbel_tail}). The final guarantee (\cref{thm:kendall_utility}) demonstrates that $\kalg$ selects useful features even in the presence of redundant features.

We start with the population Kendall rank correlation guarantee. \narxiv{Its proof, and all proofs for uncited results in this section, appears in \cref{sec:utility_proofs} in the Appendix.}
\begin{restatable}{lemma}{PopTau}
\label{lem:pop_tau}
    Suppose that $X_1, \ldots, X_k$ are independent random variables where $X_j \sim N(\mu_j, \sigma_j^2)$. Let $\xi \sim N(0, \sigma_e^2)$ be independent noise. Then if the label is generated by $Y = \sum_{j=1}^k \beta_j X_j + \xi$, for any $j^* \in [k]$,
    \[
        \tau(X_{j^*}, Y) = \frac{2}{\pi} \cdot \arctan{\frac{\beta_{j^*}\sigma_{j^*}}{\sqrt{\sum_{j \neq j^*} \beta_j^2\sigma_j^2 + \sigma_e^2}}}.
    \]
\end{restatable}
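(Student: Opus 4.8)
The plan is to reduce the computation to the classical relationship between Kendall's $\tau$ and the Pearson correlation of a bivariate Gaussian. Let $(X_1', \ldots, X_k', \xi')$ be an independent copy of $(X_1, \ldots, X_k, \xi)$, and set $Y' = \sum_{j=1}^k \beta_j X_j' + \xi'$. Writing $U = X_{j^*} - X_{j^*}'$ and $V = Y - Y'$, the definition of population Kendall rank correlation gives $\tau(X_{j^*}, Y) = \P{}{UV > 0} - \P{}{UV < 0}$. Since $X_{j^*}$ is continuous, $U \neq 0$ almost surely, so there are no ties and $\P{}{UV = 0} = 0$.

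First I would observe that $(U, V)$ is a centered bivariate normal vector, being a linear image of the jointly Gaussian independent family. A direct second-moment computation then yields $\var{U} = 2\sigma_{j^*}^2$ and $\var{V} = 2\left(\sum_{j} \beta_j^2\sigma_j^2 + \sigma_e^2\right)$; and, using independence across coordinates and across the two copies so that only the $j = j^*$ term survives, $\cov{U}{V} = 2\beta_{j^*}\sigma_{j^*}^2$. Hence the correlation coefficient of $(U,V)$ is
\[
\rho = \frac{\cov{U}{V}}{\sqrt{\var{U}\var{V}}} = \frac{\beta_{j^*}\sigma_{j^*}}{\sqrt{\beta_{j^*}^2\sigma_{j^*}^2 + \sum_{j\neq j^*}\beta_j^2\sigma_j^2 + \sigma_e^2}}.
\]

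The heart of the argument is the standard orthant-probability identity: for a centered bivariate normal with correlation $\rho$, one has $\P{}{U > 0, V > 0} = \tfrac14 + \tfrac{1}{2\pi}\arcsin\rho$. By the central symmetry of the distribution, $\P{}{UV > 0} = 2\,\P{}{U>0, V>0} = \tfrac12 + \tfrac1\pi \arcsin\rho$, and combined with $\P{}{UV=0}=0$ this gives $\tau(X_{j^*},Y) = \tfrac2\pi \arcsin\rho$. This identity is the one nontrivial ingredient; I would either cite it or prove it quickly by the derivative-in-$\rho$ route: setting $p(\rho) = \P{}{U>0, V>0}$ for standardized $(U,V)$, we have $p(0) = \tfrac14$ and $p'(\rho) = \tfrac{1}{2\pi\sqrt{1-\rho^2}}$ (Price's theorem applied to the Gaussian density), whose integral is $\tfrac14 + \tfrac{1}{2\pi}\arcsin\rho$.

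Finally I would convert $\arcsin$ into $\arctan$. Setting $a = \beta_{j^*}\sigma_{j^*}$ and $b = \sqrt{\sum_{j\neq j^*}\beta_j^2\sigma_j^2 + \sigma_e^2}$, the correlation is $\rho = a/\sqrt{a^2+b^2}$, and the right triangle with legs $a$ and $b$ shows $\arcsin\!\left(a/\sqrt{a^2+b^2}\right) = \arctan(a/b)$. Substituting yields exactly the claimed formula. The only genuine obstacle is the $\tfrac2\pi\arcsin\rho$ identity; everything else is bookkeeping with Gaussian covariances and a one-line trigonometric identity.
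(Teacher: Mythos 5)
Your proof is correct, but it takes a genuinely different route from the paper's. The paper conditions on $t = X_{j^*} - X_{j^*}'$, writes $\P{}{(X_{j^*}-X_{j^*}')(Y-Y')>0}$ as an explicit integral of $\phi(t/\sigma)\,\Phi(\beta_{j^*}t/\sigma_{-1})$, and evaluates it via a tabulated normal integral (Equation 1{,}010.4 of Owen's tables), which produces the $\arctan$ directly. You instead observe that $(U,V)=(X_{j^*}-X_{j^*}',\,Y-Y')$ is centered bivariate normal, compute its Pearson correlation $\rho$ by elementary covariance bookkeeping, and invoke Sheppard's orthant-probability formula $\P{}{U>0,V>0}=\tfrac14+\tfrac{1}{2\pi}\arcsin\rho$ to get $\tau=\tfrac2\pi\arcsin\rho$, converting $\arcsin$ to $\arctan$ at the end (the oddness of both functions handles the sign of $\beta_{j^*}$ correctly, and the $\sqrt{2}$ factors from differencing cancel in $\rho$ just as the paper's $\sigma/\sigma_{-1}$ ratio does). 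Each approach leans on exactly one nontrivial ``table'' fact; yours is the more classical and citable one (it is essentially Greiner's relation between Kendall's $\tau$ and Pearson's $\rho$ for Gaussians), and it makes the structure more transparent by isolating $\rho$, whereas the paper's direct integration is self-contained modulo the Owen identity. Both are complete and yield the identical formula.
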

\arxiv{\begin{proof}
    Recall from \cref{def:kendall} the population formulation of $\tau$,
    \begin{equation}
    \label{eq:tau}
        \tau(X, Y) = \P{}{(X - X')(Y - Y') > 0} - \P{}{(X - X')(Y - Y') < 0}
    \end{equation}
    where $X$ and $X'$ are i.i.d., as are $Y$ and $Y'$. In our case, we can define $Z = \sum_{j \neq j^*} \beta_{j}X_j + \xi$ and rewrite the first term of \cref{eq:tau} for our setting as
    \begin{equation}
    \label{eq:tau_1}
        \int_{0}^{\infty} f_{X_{j^*} - X_{j^*}'}(t) \cdot [1 - F_{Z-Z'}(-\beta_{j^*}t)] dt
        + \int_{-\infty}^0 f_{X_{j^*} - X_{j^*}'}(t) \cdot F_{Z-Z'}(-\beta_{j^*}t) dt.
    \end{equation}
    where $f_A$ and $F_A$ denote densities and cumulative distribution functions of random variable $A$, respectively. Since the relevant distributions are all Gaussian, $X_{j^*} - X_{j^*}' \sim N(0, 2\sigma_{j^*}^2)$ and $Z - Z' \sim N(0, 2[\sum_{j \neq j^*} \beta_j^2\sigma_j^2 + \sigma_e^2])$. For neatness, shorthand $\sigma^2 = 2\sigma_{j^*}^2$ and $\sigma_{-1}^2 = 2[\sum_{j \neq j^*} \beta_j^2\sigma_j^2 + \sigma_e^2]$. Then if we let $\phi$ denote the PDF of a standard Gaussian and $\Phi$ the CDF of a standard Gaussian, \cref{eq:tau_1} becomes
\begin{align*}
    &\ \int_0^{\infty} \frac{1}{\sigma} \phi(t/\sigma) \cdot [1 - \Phi(-\beta_{j^*}t / \sigma_{-1})] dt
    + \int_{-\infty}^0 \frac{1}{\sigma}\phi(t/\sigma) \Phi(-\beta_{j^*}t / \sigma_{-1}) dt \\
    =&\ \frac{1}{\sigma}\left[\int_0^\infty \phi(t/\sigma) \Phi(\beta_{j^*}t/\sigma_{-1}) dt
    + \int_{-\infty}^0 \phi(-t/\sigma) \Phi(-\beta_{j^*}t / \sigma_{-1}) dt \right] \\
    =&\ \frac{2}{\sigma}\int_0^\infty \phi(t/\sigma) \Phi(\beta_{j^*}t/\sigma_{-1}) dt.
\end{align*}
We can similarly analyze the second term of \cref{eq:tau} to get
\begin{align*}
    &\ \int_0^\infty f_{X_{j^*} - X_{j^*}'}(t) \cdot F_{Z - Z'}(-\beta_{j^*}t) dt + \int_{-\infty}^0 f_{X_{j^*} - X_{j^*}'}(t) \cdot [1 - F_{Z-Z'}(-\beta_{j^*}t)] dt \\
    =&\ \int_0^\infty \frac{1}{\sigma}\phi(t/\sigma) \Phi(-\beta_{j^*}t/\sigma_{-1}) dt
    + \int_{-\infty}^0 \frac{1}{\sigma}\phi(t/\sigma) \Phi(\beta_{j^*}t / \sigma_{-1}) dt \\
    =&\ \frac{2}{\sigma} \int_{-\infty}^0 \phi(t/\sigma) \Phi(\beta_{j^*}t/\sigma_{-1}) dt.
\end{align*}
Using both results, we get
\begin{align*}
    \tau(X_1, Y) =&\ \frac{2}{\sigma}\left[\int_0^\infty \phi(t/\sigma) \Phi(\beta_{j^*}t/\sigma_{-1}) dt - \int_{-\infty}^0 \phi(t/\sigma) \Phi(\beta_{j^*}t/\sigma_{-1}) dt\right] \\
    =&\ \frac{2}{\sigma}\left[\int_0^\infty \phi(t/\sigma)\Phi(\beta_{j^*}t/\sigma_{-1})dt - \int_0^\infty \phi(t/\sigma)[1 - \Phi(\beta_{j^*}t/\sigma_{-1})]dt\right] \\
    =&\ \frac{4}{\sigma}\int_0^\infty \phi(t/\sigma)\Phi(\beta_{j^*}t/\sigma_{-1})dt - 1 \\
    =&\ \frac{4}{\sigma} \cdot \frac{\sigma}{2\pi}\left(\frac{\pi}{2} + \arctan{\frac{\beta_{j^*}\sigma}{\sigma_{-1}}}\right) - 1 \\
    =&\ \frac{2}{\pi} \cdot \arctan{\frac{\beta_{j^*}\sigma}{\sigma_{-1}}}.
\end{align*}
where the third equality uses $\int_0^\infty \phi(t/\sigma) dt = \frac{\sigma}{2}$ and the fourth equality comes from Equation 1,010.4 of~\citet{O80}. Substituting in the values of $\sigma$ and $\sigma_{-1}$ yields the claim.
\end{proof}}
To interpret this result, recall that $\tau \in [-1,1]$ and $\arctan$ has domain $\mathbb{R}$, is odd, and has $\lim_{x \to \infty} \arctan{x} = \pi/2$. \cref{lem:pop_tau} thus says that if we fix the other $\sigma_j$ and $\sigma_e$ and take $\sigma_{j^*} \to \infty$, $\tau(X_{j^*}, Y) \to \sign{\beta_{j^*}}$ as expected. The next step is to verify that $\hat \tau$ concentrates around $\tau$.

\begin{lemma}[Lemma 1~\cite{AMO22}]
\label{lem:pop_sample_tau}
    Given $n$ observations each of random variables $X$ and $Y$, with probability $1-\eta$,
    \[
        |\hat \tau(X, Y) - \frac{n}{2} \cdot \tau(X, Y)| \leq \sqrt{8n\ln(2/\eta)}.
    \]
\end{lemma}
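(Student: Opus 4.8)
The plan is to recognize that the number of discordant pairs $d_{X,Y}$ is a U-statistic and then apply a bounded-differences concentration inequality. First I would record the elementary identity relating discordance probability to $\tau$: under the no-ties assumption, $\P{}{(X - X')(Y - Y') < 0} = \tfrac{1 - \tau(X,Y)}{2}$, since the concordant and discordant probabilities sum to one while their difference is $\tau$. Writing $d_{X,Y} = \sum_{i < i'} \indic{(X_i - X_{i'})(Y_i - Y_{i'}) < 0}$ as a sum over the $\binom{n}{2}$ pairs, linearity of expectation gives $\E{}{d_{X,Y}} = \binom{n}{2} \cdot \tfrac{1 - \tau}{2}$, and substituting into $\hat \tau = \tfrac{n}{2} - \tfrac{2 d_{X,Y}}{n-1}$ shows $\E{}{\hat \tau(X,Y)} = \tfrac{n}{2} \tau(X,Y)$. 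Thus the quantity to be controlled is precisely the deviation of $\hat \tau$ from its own mean.

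The core step is the concentration bound. Because the pairwise indicators are dependent, a Hoeffding bound for independent sums does not apply directly; instead I would view $\hat \tau$ as a function of the $n$ i.i.d. observations $(X_i, Y_i)$ and invoke McDiarmid's bounded-differences inequality. The key observation is that each observation participates in exactly $n-1$ of the pairwise terms, so replacing a single $(X_i, Y_i)$ changes $d_{X,Y}$ by at most $n-1$ and hence changes $\hat \tau$ by at most $\tfrac{2}{n-1} \cdot (n-1) = 2$. With per-coordinate bounded differences $c_i = 2$ and $\sum_i c_i^2 = 4n$, McDiarmid yields $\P{}{|\hat \tau - \tfrac{n}{2}\tau| \geq s} \leq 2\exp\left(-\tfrac{s^2}{2n}\right)$.

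Finally I would invert the tail bound: setting the right-hand side equal to $\eta$ gives $s = \sqrt{2n \ln(2/\eta)}$, which is stronger than, and hence implies, the stated bound $\sqrt{8n \ln(2/\eta)}$. The main conceptual obstacle is handling the dependence among the $\binom{n}{2}$ discordance indicators; the reason the argument goes through cleanly is that the $\tfrac{1}{n-1}$ normalization built into $\hat \tau$ exactly cancels the number of pairs each observation can influence, keeping the per-observation sensitivity constant at $2$ and thus the $\sum_i c_i^2$ term linear in $n$. An alternative route with the same outcome is Hoeffding's dedicated inequality for U-statistics of order two, which again produces a constant smaller than the stated $8$, so the loose constant leaves ample slack.
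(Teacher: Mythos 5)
Your proof is correct, and in fact there is nothing in the paper to compare it against: the paper imports this statement as Lemma~1 of the cited reference and gives no proof of its own, so a self-contained derivation is welcome. Your argument is sound at every step. The identity $\P{}{(X-X')(Y-Y')<0} = \tfrac{1-\tau}{2}$ is valid here because ties occur with probability zero (the paper explicitly assumes a tie-free version of $\hat\tau$, and in the utility theorem the variables are continuous), and the computation $\E{}{\hat\tau} = \tfrac{n}{2}\tau$ follows correctly from $\E{}{d_{X,Y}} = \binom{n}{2}\tfrac{1-\tau}{2}$. The bounded-differences constant is also right: changing one observation alters the discordance status of at most the $n-1$ pairs containing it, so $\hat\tau$ moves by at most $\tfrac{2}{n-1}\cdot(n-1) = 2$, and McDiarmid with $\sum_i c_i^2 = 4n$ gives the two-sided tail $2\exp\bigl(-s^2/(2n)\bigr)$, hence $s = \sqrt{2n\ln(2/\eta)}$. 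This is strictly stronger than the stated $\sqrt{8n\ln(2/\eta)}$, so your bound implies the lemma with room to spare; your observation that the $\tfrac{1}{n-1}$ normalization exactly cancels the per-observation pair count is precisely the reason the deviation stays $O(\sqrt{n})$ rather than $O(n)$. The only cosmetic point worth noting is that you should state explicitly that the $n$ observations are assumed i.i.d.\ (which McDiarmid requires); the lemma statement leaves this implicit.
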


Finally, we state a basic accuracy result for the Gumbel noise employed by $\peel$.
\begin{restatable}{lemma}{GumbelTail}
\label{lem:gumbel_tail}
    Given i.i.d. random variables $X_1, \ldots, X_d \sim \gumbel{b}$, with probability $1-\eta$, 
    \[
        \max_{j \in [d]} |X_j| \leq b \ln\left(\frac{2d}{\eta}\right).
    \]
\end{restatable}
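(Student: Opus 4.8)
The plan is to reduce the maximum over $d$ variables to a single-variable two-sided tail bound via a union bound. The first step is to write down the CDF of $\gumbel{b}$: integrating the density from \cref{def:gumbel} gives $F(x) = \exp(-e^{-x/b})$, which one can verify by differentiation (the derivative recovers the stated density $\tfrac{1}{b}\exp(-x/b - e^{-x/b})$). With the CDF in hand, I would bound the upper and lower tails of a single $X_j$ separately.

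For the upper tail, for any $t > 0$ we have $\P{}{X_j > t} = 1 - \exp(-e^{-t/b}) \le e^{-t/b}$, using the elementary inequality $1 - e^{-u} \le u$ for $u \ge 0$ applied with $u = e^{-t/b}$. For the lower tail, $\P{}{X_j < -t} = \exp(-e^{t/b}) \le e^{-t/b}$, using $e^{v} \ge v$ (indeed $e^v \ge 1 + v$) with $v = t/b$; note that the lower tail in fact decays doubly exponentially and so is comfortably dominated by the upper tail. Combining the two gives the two-sided bound $\P{}{|X_j| > t} \le 2 e^{-t/b}$.

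Finally, I would set $t = b \ln\!\left(\tfrac{2d}{\eta}\right)$, so that $2 e^{-t/b} = 2 \cdot \tfrac{\eta}{2d} = \tfrac{\eta}{d}$, and take a union bound over $j \in [d]$ to obtain $\P{}{\max_{j \in [d]} |X_j| > t} \le d \cdot \tfrac{\eta}{d} = \eta$. Hence with probability at least $1 - \eta$ we have $\max_{j \in [d]} |X_j| \le b \ln\!\left(\tfrac{2d}{\eta}\right)$, as claimed. (The threshold is positive whenever $\eta \le 1$ and $d \ge 1$, since then $2d/\eta > 1$, so the tail inequalities above apply.)

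No step here is a genuine obstacle; the computation is entirely routine. The only point requiring mild care is the lower tail, where one must confirm that the doubly-exponential term $\exp(-e^{t/b})$ is correctly upper bounded by $e^{-t/b}$ so that it does not inflate the final constant beyond the factor of $2$ that is absorbed into $\ln(2d/\eta)$. Every inequality used is a one-line elementary bound.
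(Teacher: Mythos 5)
Your proof is correct and yields exactly the paper's bound. The overall skeleton is the same as the paper's --- a union bound over the $2d$ one-sided tail events and the threshold $t = b\ln(2d/\eta)$ --- but the per-coordinate tail bound is obtained differently. The paper first proves that the lower tail is dominated by the upper tail (showing $f(x)\ge f(-x)$ for $x\ge 0$ via a power-series comparison, hence $F(-t)\le 1-F(t)$), reduces everything to $2d\,(1-F(t))\le \eta$, and then inverts the Gumbel CDF, using $-\ln(1-x)\ge x$ at the last step. You instead bound each tail directly by the same clean exponential: $1-\exp(-e^{-t/b})\le e^{-t/b}$ for the upper tail and $\exp(-e^{t/b})\le e^{-t/b}$ for the lower tail, giving $\P{}{|X_j|>t}\le 2e^{-t/b}$ and making the choice of $t$ immediate. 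Your route is slightly more direct and avoids both the density-ratio argument and the CDF inversion; the paper's route has the minor advantage of making explicit that the lower tail is doubly-exponentially small and hence that the factor of $2$ is conservative, a fact you also note in passing. Both arguments are elementary and give the identical statement, so there is nothing to fix.
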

\arxiv{\begin{proof}
    Recall from \cref{def:gumbel} that $\gumbel{b}$ has density $f(x) = \frac{1}{b} \cdot \exp\left(-\frac{x}{b} - e^{-x/b}\right)$. Then $\frac{f(x)}{f(-x)} = \exp\left(-\frac{x}{b} - e^{-x/b} - \frac{x}{b} + e^{x/b}\right)$. For $z \geq 0$, by $e^z = \sum_{n=0}^\infty \frac{z^n}{n!}$, 
    \[
        2z + e^{-z} \leq 2z + 1 - z + \frac{z^2}{2} = 1 + z + \frac{z^2}{2} \leq e^z
    \]
    so since $b \geq 0$, $f(x) \geq f(-x)$ for $x \geq 0$. Letting $F(z) = \exp(-\exp(-z/b))$ denote the CDF for $\gumbel{b}$, it follows that for $z \geq 0$, $1 - F(z) \geq F(-z)$. Thus the probability that $\max_{j \in [d]} |X_j|$ exceeds $t$ is upper bounded by $2d(1 - F(t))$. The claim then follows from rearranging the inequality
    \begin{align*}
        2d(1 - F(t)) \leq&\ \eta \\
        1 - \frac{\eta}{2d} \leq&\ F(t) \\
        1 - \frac{\eta}{2d} \leq&\ \exp(-\exp(-t/b)) \\
        \ln\left(1 - \frac{\eta}{2d}\right) \leq&\  -\exp(-t/b) \\
       -b\ln\left(-\ln\left(1 - \frac{\eta}{2d}\right)\right) \leq&\ t \\
       b \ln\left(\frac{1}{-\ln(1 - \eta/[2d])}\right) \leq&\ t \\
    \end{align*}
    and using $-\ln(1-x) = \sum_{i=1}^\infty x^i/i \geq x$.
\end{proof}}

We now have the tools necessary for the final result.

\begin{restatable}{theorem}{KendallUtility}
\label{thm:kendall_utility}
    Suppose that $X_1, \ldots, X_k$ are independent random variables where each $X_j \sim N(\mu_j, \sigma_j^2)$. Suppose additionally that of the remaining $d-k$ random variables, for each $j \in [k]$, $n_j$ are copies of $X_j$, where $\sum_{j=1}^k n_j \leq d-k$. For each $j \in [k]$, let $S_j$ denote the set of indices consisting of $j$ and the indices of its copies. Then if the label is generated by $Y = \sum_{j=1}^k \beta_j X_j + \xi$ where $\xi \sim N(0, \sigma_e^2)$ is independent random noise, if
    \[
        n = \Omega\left(\frac{k \cdot  \ln(dk/\eta)}{\eps \cdot \min_{j^* \in [k]} \bigl\{ \left|\arctan{\frac{\beta_{j^*}\sigma_{j^*}}{\sqrt{\sum_{j \neq j^*} \beta_j^2\sigma_j^2 + \sigma_e^2}}}\right|\bigr\}}\right),
    \]
    then with probability $1-O(\eta)$, $\kalg$ correctly selects exactly one index from each of $S_1, \ldots, S_k$.
\end{restatable}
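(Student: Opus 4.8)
The plan is to condition on a high-probability ``good event'' and then run an induction over the $k$ rounds of $\kalg$, showing that each round selects a representative of a previously-unselected group. To set up the event, I would apply \cref{lem:pop_sample_tau} to each of the $O(d^2)$ correlations $\kalg$ can query (every feature--label and feature--feature pair), so that with a union bound every empirical correlation obeys $|\hat\tau(A,B)-\tfrac n2\tau(A,B)|\le E$ for $E=\sqrt{8n\ln(O(d^2)/\eta)}$; and apply \cref{lem:gumbel_tail} to the $k$ calls to $\peel$, each with noise scale $\tfrac{2k\Delta_\infty}{\eps}\le\tfrac{6k}{\eps}$, so that every Gumbel draw has magnitude at most $B=\tfrac{6k}{\eps}\ln(O(dk)/\eta)$. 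Splitting the failure budget so the total is $O(\eta)$ places us on an event of probability $1-O(\eta)$ on which every score is within $E+B$ of its population value.

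Next I would pin down the population correlations. By independence, distinct true features and the unspecified noise features are uncorrelated, so $\tau=0$ and $|\hat\tau|\le E$ for every cross-group or noise pair; identical copies within a group are perfectly concordant, so $\tau=1$ and $|\hat\tau|\ge\tfrac n2-E$; and \cref{lem:pop_tau} gives $|\tau(X_{j^*},Y)|=\gamma_{j^*}:=\tfrac2\pi|\arctan(\cdots)|$, so the label term of any group-$j^*$ feature lies in $[\tfrac n2\gamma_{j^*}-E,\tfrac n2\gamma_{j^*}+E]$. The induction then carries the invariant that at the start of round $t$ the set $S$ holds exactly one index from each of $t-1$ distinct groups in $[k]$ and no noise index, and I maintain it by showing the $\peel$ maximizer is a fresh-group representative. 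The comparison against an already-selected index is vacuous, since its $\hat\tau^Y$ entry is set to $-\infty$. The comparison against a noise feature (score at most $E+B$) is won by the best remaining fresh representative, whose score is at least $\tfrac n2\gamma_{\min}-2E-B$: the hypothesized lower bound on $n$ is exactly what forces $\tfrac n2\gamma_{\min}>3E+2B$, with the $\tfrac{k\ln(dk/\eta)}{\eps}$ scaling controlling $B$ and $\gamma_{\min}=\tfrac2\pi\min_{j^*}|\arctan(\cdots)|$ appearing in the denominator.

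The delicate comparison---and the step I expect to be the main obstacle---is against a copy of an already-represented group $j^*$. Its label term can be as large as $\tfrac n2\gamma_{j^*}+E$, while its penalty is only $\tfrac1{t-1}$ times the sum of its correlations with the $t-1$ selected indices; since just one of those (its own twin) is large, the penalty is roughly $\tfrac{n}{2(t-1)}$ and \emph{shrinks} as the rounds advance. I must therefore establish $\tfrac{n}{2(t-1)}\gtrsim\tfrac n2(\gamma_{j^*}-\gamma_g)+O(E+B)$, i.e.\ that the averaged penalty still dominates the label-correlation advantage of a strong represented group over a weaker fresh group $g$. I would handle this in two steps: first argue that $\peel$ selects groups in non-increasing order of $\gamma$, so every represented group is at least as label-correlated as every fresh one; and second, invoke the single-linear-model structure, which forces $\gamma_j=\tfrac2\pi\arcsin\sqrt{u_j}$ with variance shares $u_j=\tfrac{\beta_j^2\sigma_j^2}{\sum_{j'}\beta_{j'}^2\sigma_{j'}^2+\sigma_e^2}$ satisfying $\sum_j u_j\le 1$.

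I regard quantitatively reconciling the shrinking $\tfrac1{t-1}$ penalty with the achievable gap as the heart of the argument. The budget constraint $\sum_j u_j\le 1$ is the only available leverage preventing the $\gamma_j$ from being simultaneously large, and hence the only way to bound $\gamma_{j^*}-\gamma_g$ in terms of $\tfrac1{t-1}$; concavity of $u\mapsto\arcsin\sqrt u$ together with the fact that the $t$-th largest share is at most $\tfrac1t$ is what I would try to push through here. I would flag honestly that this is the fragile quantitative step: if the budget slack is spent almost entirely on one dominant feature, a strongly-correlated copy whose penalty has decayed could in principle resurface in a late round, so the careful bookkeeping linking $E$, $B$, the penalty $\tfrac{n}{2(t-1)}$, and the $\arcsin$ gaps is where the real work lies, while the remaining comparisons reduce to the standard ``signal beats noise'' estimate already secured by the $n$ condition.
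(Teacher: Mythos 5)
Your setup---union-bounding \cref{lem:pop_sample_tau} over all pairwise correlations and \cref{lem:gumbel_tail} over the Gumbel draws, computing population correlations via \cref{lem:pop_tau} and independence, and then arguing round by round---is exactly the paper's strategy, and your error scales $E$ and $B$ match the paper's. The difference is that you push the induction far enough to expose a comparison the paper never performs: its proof stops at ``combining these results \dots it suffices to have $n \cdot \min_{j^*} |\arctan(\cdots)| = \Omega\left([\sqrt{n} + k/\eps]\ln(dk/\eta)\right)$,'' a condition that only certifies that each informative feature's signal dominates the sampling and privacy noise. It never checks a copy of an already-represented group against a fresh representative of a weaker group, which is precisely the step you flag as the heart of the argument.

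That step is a genuine gap, and your own candidate repair does not close it. Take $k=3$ with $n_1 \ge 1$ and variance shares concentrated on feature $1$, e.g.\ $p_1 \approx 0.975$, so that $\gamma_1 = \tfrac{2}{\pi}\arcsin\sqrt{p_1} \approx 0.9$ while $\gamma_2, \gamma_3 \lesssim 0.1$ (feasible under the budget $\sum_j p_j \le 1$). Rounds 1 and 2 select group 1 and then the stronger of the fresh groups, but in round 3 a copy of group 1 scores roughly $\tfrac{n}{2}(0.9) - \tfrac{1}{2}\cdot\tfrac{n}{2} = 0.2n$, whereas the remaining fresh representative scores at most about $0.05n$; since $E + B = o(n)$, the copy wins and the output double-counts $S_1$. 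So the inequality $\gamma_{j^*} - \gamma_g \lesssim \tfrac{1}{t-1}$ you correctly identify as necessary can genuinely fail, your proposed invariant that groups are selected in non-increasing order of $\gamma$ fails for the same reason, and no concavity argument on $u \mapsto \arcsin\sqrt{u}$ rescues it, because the budget can be spent almost entirely on one dominant feature. In short, your proposal faithfully reproduces the paper's argument up to the point where the paper's own proof is silent, and the missing comparison appears to require an additional hypothesis (e.g.\ comparable $\gamma_j$ across groups, or a selected-feature penalty that does not decay with $t$) rather than more careful bookkeeping.
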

\begin{proof}
    The proof reduces to applying the preceding lemmas with appropriate union bounds. Dropping the constant scaling of $\eta$ for neatness, with probability $1 - O(\eta)$:

    \textbf{1. } Feature-label correlations are large for informative features and small for uninformative features: by \cref{lem:pop_tau} and \cref{lem:pop_sample_tau}, each feature in $j^* \in \cup_{j \in [k]} S_j$ has
        \[
            \hat \tau(X_{j^*}, Y) \geq \frac{n}{\pi} \cdot \arctan{\frac{\beta_{j^*}\sigma_{j^*}}{\sqrt{\sum_{j \neq j^*} \beta_j^2\sigma_j^2 + \sigma_e^2}}} - \sqrt{8n\ln(d/\eta)})
        \]
        and any $j^* \not \in \cup_{j \in [k]} S_j$ has $\hat \tau(X_{j^*}, Y) \leq \sqrt{8n\ln(d/\eta)}$.
    
    \textbf{2. } Feature-feature correlations are large between copies of a feature and small between independent features: by \cref{lem:pop_tau} and \cref{lem:pop_sample_tau}, for any $j \in [k]$ and $j_1, j_2 \in S_j$, 
        \[
            \hat \tau(X_{j_1}, X_{j_2}) \geq \frac{n}{2} - \sqrt{8n\ln(d/\eta)}
        \]
        and for any $j_1, j_2$ such that there exists no $S_j$ containing both, $\hat \tau(X_{j_1}, X_{j_2}) \leq \sqrt{8n\ln(d/\eta)}$.
    
    \textbf{3. } The at most $dk$ draws of Gumbel noise have absolute value bounded by $\tfrac{k}{\eps}\ln\left(\tfrac{dk}{\eta}\right)$.
    
    Combining these results, to ensure that $\kalg$'s $k$ calls to $\peel$ produce exactly one index from each of $S_1, \ldots, S_k$, it suffices to have
    \[
        n \cdot \min_{j^* \in [k]} \bigl\{ \left|\arctan{\frac{\beta_{j^*}\sigma_{j^*}}{\sqrt{\sum_{j \neq j^*} \beta_j^2\sigma_j^2 + \sigma_e^2}}}\right|\bigr\} = \Omega\left([\sqrt{n} + \tfrac{k}{\eps}] \ln(\tfrac{dk}{\eta})\right)
    \]
    which rearranges to yield the claim.
\end{proof}
\section{Experiments}
\label{sec:experiments}
This section collects experimental evaluations of $\kalg$ and other methods on 25 of the 33 datasets\footnote{We omit the datasets with OpenML task IDs 361080-361084 as they are restricted versions of other included datasets. We also exclude 361090 and 361097 as non-private OLS obtained $R^2 \ll 0$ on both. Details for the remaining datasets appear in \cref{fig:datasets} in the Appendix.} used by~\citet{TAKRR+23}. Descriptions of the relevant algorithms appear in \cref{subsec:baseline_feature_selection} and \cref{subsec:comparison_algos}. \cref{subsec:results} discusses the results. Experiment code may be found on Github \cite{G23}.

\subsection{Feature Selection Baseline}
\label{subsec:baseline_feature_selection}
Our experiments use $\sublasso$~\cite{KST12} as a baseline ``plug-and-play'' private feature selection method. At a high level, the algorithm randomly partitions its data into $m$ subsets, computes a non-private Lasso regression model on each, and then privately aggregates these models to select $k$ significant features. The private aggregation process is simple; for each subset's learned model, choose the $k$ features with largest absolute coefficient, then apply private top-$k$ to compute the $k$ features most selected by the $m$ models. \citet{KST12} introduced and analyzed $\sublasso$; we collect its relevant properties in \cref{lem:sublasso}. Pseudocode appears in \cref{alg:sublasso}.

\begin{algorithm}
    \begin{algorithmic}[1]
    \STATE {\bfseries Input:} Examples $D = \{(X_i, Y_i)\}_{i=1}^n$, number of selected features $k$, number of models $m$, privacy parameter $\eps$
    \alglinelabel{algln:slalg_input}
    \STATE Randomly partition $D$ into $m$ equal-size subsets $S_1, \ldots, S_m$
    \FOR{$i=1, \ldots, m$} \alglinelabel{algln:slalg_m_loop}
        \STATE Compute Lasso model $\theta_i$ on $S_i$
        \alglinelabel{algln:slalg_lasso}
        \STATE Compute set $C_i$ of the $k$ indices of $\theta_i$ with largest absolute value
        \STATE Compute binary vector $v_i \in \{0,1\}^d$ where $v_{i,j} = \indic{j \in C_i}$
    \ENDFOR
    \STATE Compute $V \in \mathbb{R}^d = \sum_{i=1}^n v_i$
    \STATE Return $\peel(V, k, 1, \eps)$
    \caption{$\sublasso(D, k, m, \eps)$}
    \label{alg:sublasso}
    \end{algorithmic}
\end{algorithm}

\begin{lemma}
\label{lem:sublasso}
    $\sublasso$ is $\eps$-DP and runs in time $O(d^2n)$.
\end{lemma}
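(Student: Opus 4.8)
The plan is to prove the two claims separately: the privacy guarantee reduces to a sensitivity calculation for the vote vector $V$ followed by an appeal to Lemma~\ref{lem:peel_dp}, while the runtime follows from accounting for each stage and observing that the $m$ Lasso solves dominate.

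For privacy, the key structural fact is that $\sublasso$ partitions $D$ into \emph{disjoint} subsets $S_1, \ldots, S_m$, so each record participates in exactly one subset and hence influences exactly one of the Lasso models $\theta_i$, one index set $C_i$, and one binary vector $v_i$. First I would fix neighboring databases $D \sim D'$ differing in a single record $r$ and couple their random partitions so that every record other than $r$ is assigned to the same subset in both runs. Under this coupling the only subset that can differ is the one containing $r$, so at most one index $i$ has $v_i \neq v_i'$. Since each $v_i \in \{0,1\}^d$, every coordinate of $V = \sum_i v_i$ changes by at most $1$, giving $\ell_\infty$ sensitivity $\Delta_\infty = 1$ for the map $D \mapsto V$. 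Because the algorithm's only output is $\peel(V, k, 1, \eps)$, Lemma~\ref{lem:peel_dp} instantiated with $\Delta_\infty = 1$ immediately yields $\eps$-DP.

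For the runtime, I would argue that the $m$ Lasso fits are the bottleneck. Each subset contains $n/m$ points, and fitting a Lasso model on $n/m$ points in $d$ features costs $O(d^2 \cdot n/m)$ (dominated by forming the Gram information $X_i^\top X_i$), so summing over the $m$ subsets gives $O\bigl(m \cdot d^2 \cdot n/m\bigr) = O(d^2 n)$. Extracting each top-$k$ set $C_i$, forming the vectors $v_i$, accumulating $V$, and the final call to $\peel$ (which needs $O(d)$ Gumbel samples) each cost only $O(d)$ per subset up to lower-order factors, and are therefore subsumed by the $O(d^2 n)$ Lasso cost.

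The main obstacle is making the coupling in the privacy argument rigorous, in particular ensuring that the ``equal-size'' requirement does not break the one-record-one-subset property under add-remove neighbors, where $n$ itself changes by one. The cleanest resolution is to note that the bound $\Delta_\infty(V) \le 1$ holds for \emph{every} fixed realization of the partition's internal randomness, since any single record can land in only one subset regardless of how the sizes are balanced; treating this randomness as common randomness shared across $D$ and $D'$, the per-realization sensitivity bound and thus $\peel$'s $\eps$-DP guarantee hold unconditionally after averaging over the partition.
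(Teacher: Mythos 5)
Your proposal is correct and follows essentially the same route as the paper: privacy via the observation that each record affects only one vote vector $v_i$ so $V$ has $\ell_\infty$ sensitivity $1$, then \cref{lem:peel_dp}; runtime by summing the $O(d^2 n/m)$ cost of each of the $m$ Lasso fits. The paper's own proof is terser (it calls the privacy claim ``immediate'' and cites the Lasso cost), so your explicit sensitivity and coupling discussion simply fills in details the paper leaves implicit.
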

\begin{proof}
    The privacy guarantee is immediate from that of $\peel$ (\cref{lem:peel_dp}). Solving Lasso on $n/m$ data points with $d$-dimensional features takes time $O(\tfrac{d^2n}{m})$~\cite{EHJT04}. Multiplying through by $m$ produces the final result, since $\peel$ only takes time $O(dk)$.
\end{proof}

Finally, we briefly discuss the role of the intercept feature in $\sublasso$. As with all algorithms in our experiments, we add an intercept feature (with a constant value of 1) to each vector of features. Each Lasso model is trained on data with this intercept. However, the intercept is removed before the private voting step, $k$ features are chosen from the remaining features, and the intercept is added back afterward. This ensures that privacy is not wasted on the intercept feature, which we always include.

\subsection{Comparison Algorithms}
\label{subsec:comparison_algos}
We evaluate seven algorithms:
\begin{enumerate}
    \item $\nondp$ is a non-private baseline running generic ordinary least-squares regression.
    \item $\bas$ runs Boosted AdaSSP~\cite{TAKRR+23} without feature selection. We imitate the parameter settings used by~\citet{TAKRR+23} and set feature and gradient clipping norms to 1 and the number of boosting rounds to 100 throughout.
    \item $\tukey$ runs the Tukey mechanism~\cite{AJRV23} without feature selection. We introduce and use a tighter version of the propose-test-release (PTR) check given by~\citet{AJRV23}. This reduces the number of models needed for PTR to pass. The proof appears in \cref{sec:mod_ptr} in the Appendix and may be of independent interest. To privately choose the number of models $m$ used by the Tukey mechanism, we first privately estimate a $1-\eta$ probability lower bound on the number of points $n$ using the Laplace CDF, $\tilde n = n + \lap{\tfrac{1}{\eps'}} - \tfrac{\ln(1/2\eta)}{\eps'}$, and then set the number of models to $m = \lfloor \tilde n / d \rfloor$. $\tukey$ spends 5\% of its $\eps$ privacy budget estimating $m$ and the remainder on the Tukey mechanism.
    \item $\lprefix\bas$ runs spends 5\% of its $\eps$ privacy budget choosing $m = \lfloor \tilde n / k \rfloor$ for $\sublasso$, 5\% of $\eps$ running $\sublasso$, and then spends the remainder to run $\bas$ on the selected features.
    \item $\lprefix\tukey$ spends 5\% of its $\eps$ privacy budget choosing $m = \lfloor \tilde n / k \rfloor$ for $\sublasso$, 5\% running $\sublasso$, and the remainder running $\tukey$ on the selected features using the same $m$.
    \item $\kprefix\bas$ spends 5\% of its $\eps$ privacy budget running $\kalg$ and then spends the remainder running $\bas$ on the selected features.
    \item $\kprefix\tukey$ spends 5\% of its $\eps$ privacy budget choosing $m = \lfloor \tilde n / k \rfloor$, 5\% running $\kalg$, and then spends the remainder running the Tukey mechanism on the selected features.
\end{enumerate}

\subsection{Results}
\label{subsec:results}
All experiments use $(\ln(3), 10^{-5})$-DP. Where applicable, 5\% of the privacy budget is spent on private feature selection, 5\% on choosing the number of models, and the remainder is spent on private regression. Throughout, we use $\eta = 10^{-4}$ as the failure probability for the lower bound used to choose the number of models. For each algorithm and dataset, we run 10 trials using random 90-10 train-test splits and record the resulting test $R^2$ values. Tables of the results at $k=5$ and $k=10$ appear in \cref{sec:extended_results} in the Appendix. A condensed presentation appears below. At a high level, we summarize the results in terms of \emph{relative} and \emph{absolute} performance.

\begin{figure}
    \centering
    \includegraphics[scale=\narxiv{0.5}\arxiv{0.6}]{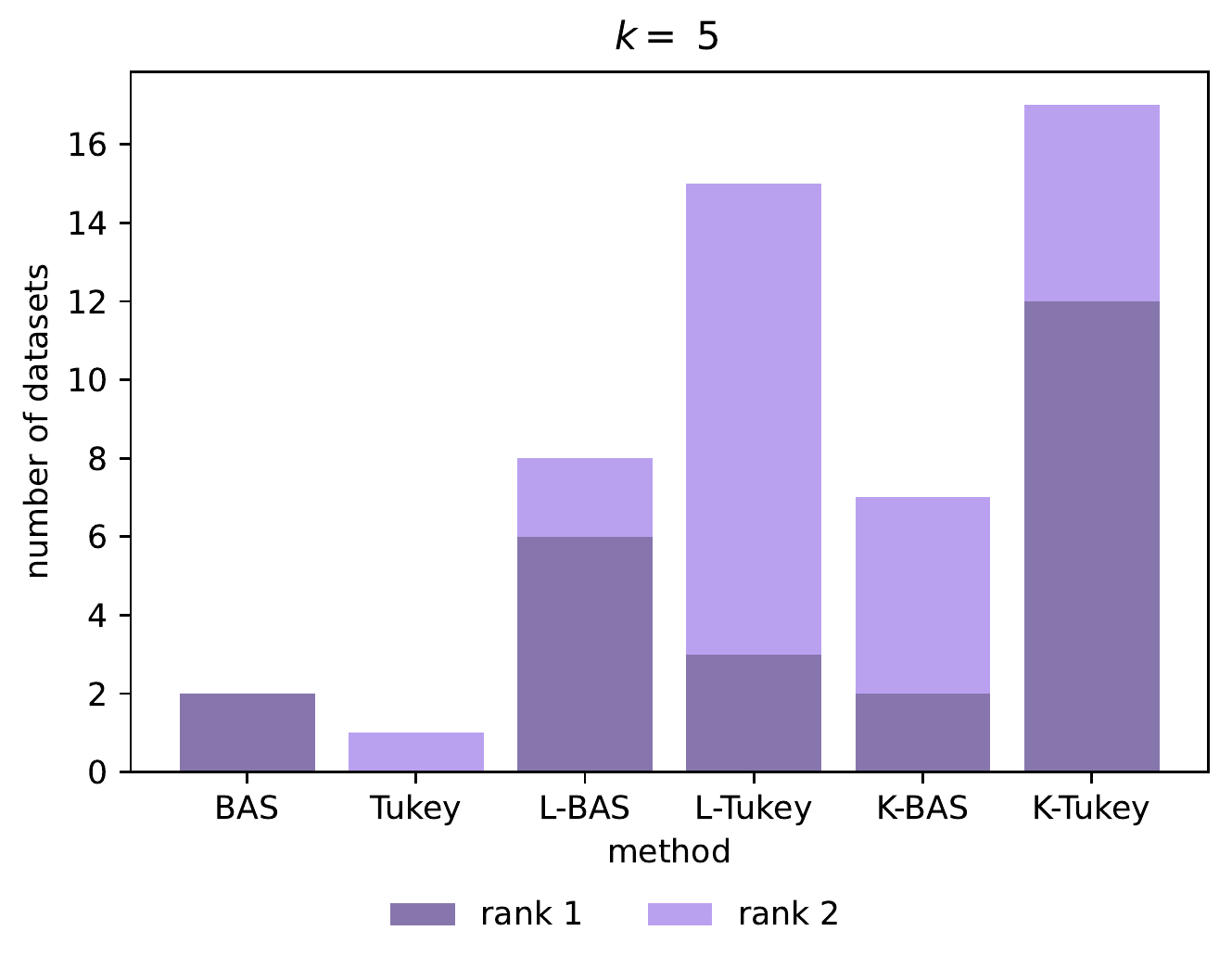}
    \includegraphics[scale=\narxiv{0.5}\arxiv{0.6}]{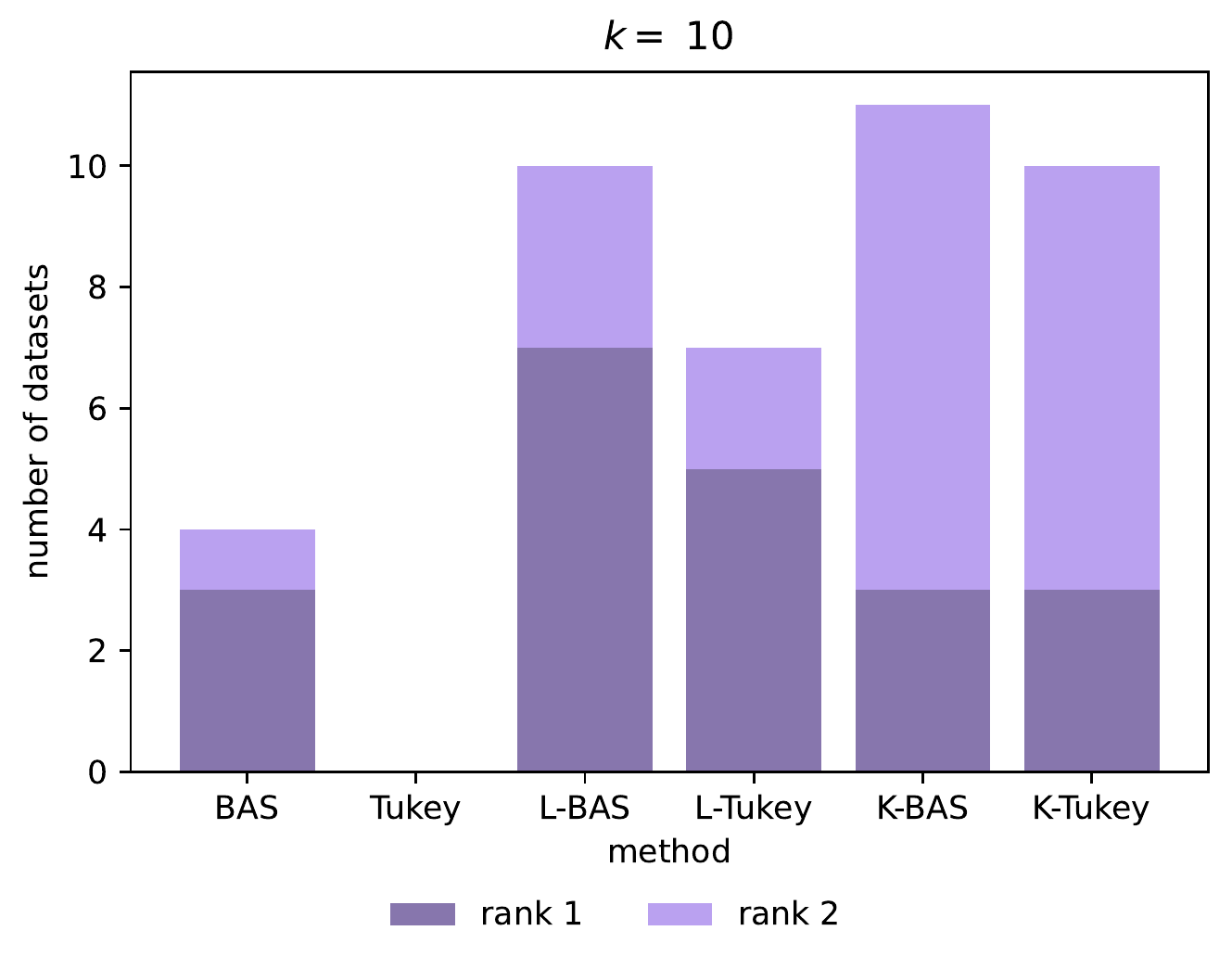}
    \caption{Plots of rank data for each private method.}
    \label{fig:rank_plots}
\end{figure}

\subsubsection{Relative Performance}
\label{subsubsec:relative_performance}
First, for each dataset and method, we compute the median $R^2$ of the final model across the 10 trials and then rank the methods, with the best method receiving a rank of 1. \cref{fig:rank_plots} plots the number of times each method is ranked first or second.

At $k=5$ (left), $\kprefix\tukey$ performs best by a significant margin: it ranks first on 48\% of datasets, twice the fraction of any other method. It also ranks first or second on the largest fraction of datasets (68\%). In some contrast, $\lprefix\bas$ obtains the top ranking on 24\% of datasets, whereas $\kprefix\bas$ only does so on 8\%; nonetheless, the two have nearly the same number of total first or second rankings. At $k=10$ (right), no clear winner emerges among the feature selecting methods, as $\lprefix\bas$, $\kprefix\bas$, and $\kprefix\tukey$ are all first or second on around half the datasets\footnote{Note that $k=10$ only uses 21 datasets. This is because 4 of the 25 datasets used for $k=5$ have $d < 10$.}, though the methods using $\sublasso$ have a higher share of datasets ranked first.

\begin{figure}
    \centering
    \includegraphics[scale=\narxiv{0.5}\arxiv{0.6}]{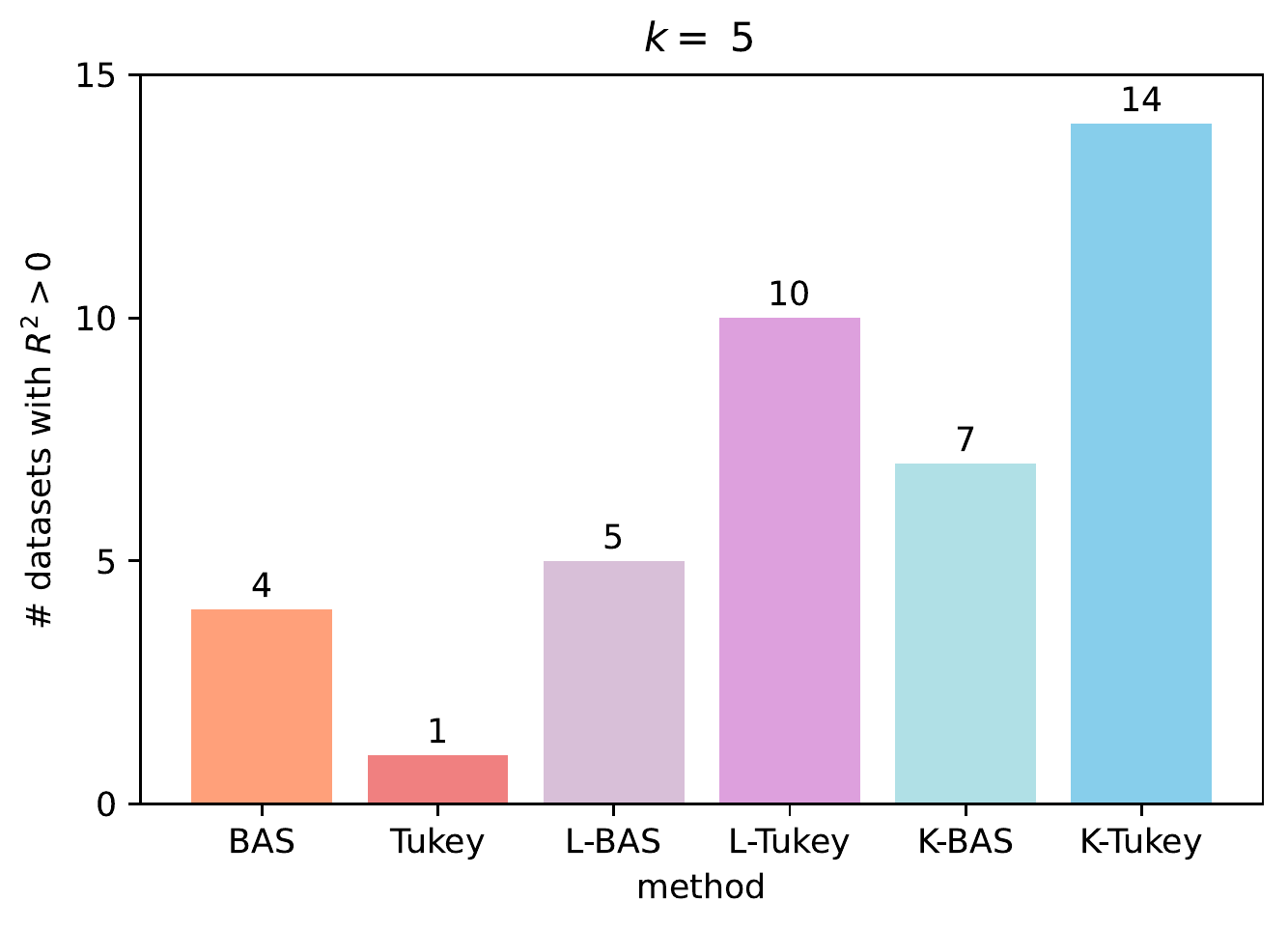}
    \includegraphics[scale=\narxiv{0.5}\arxiv{0.6}]{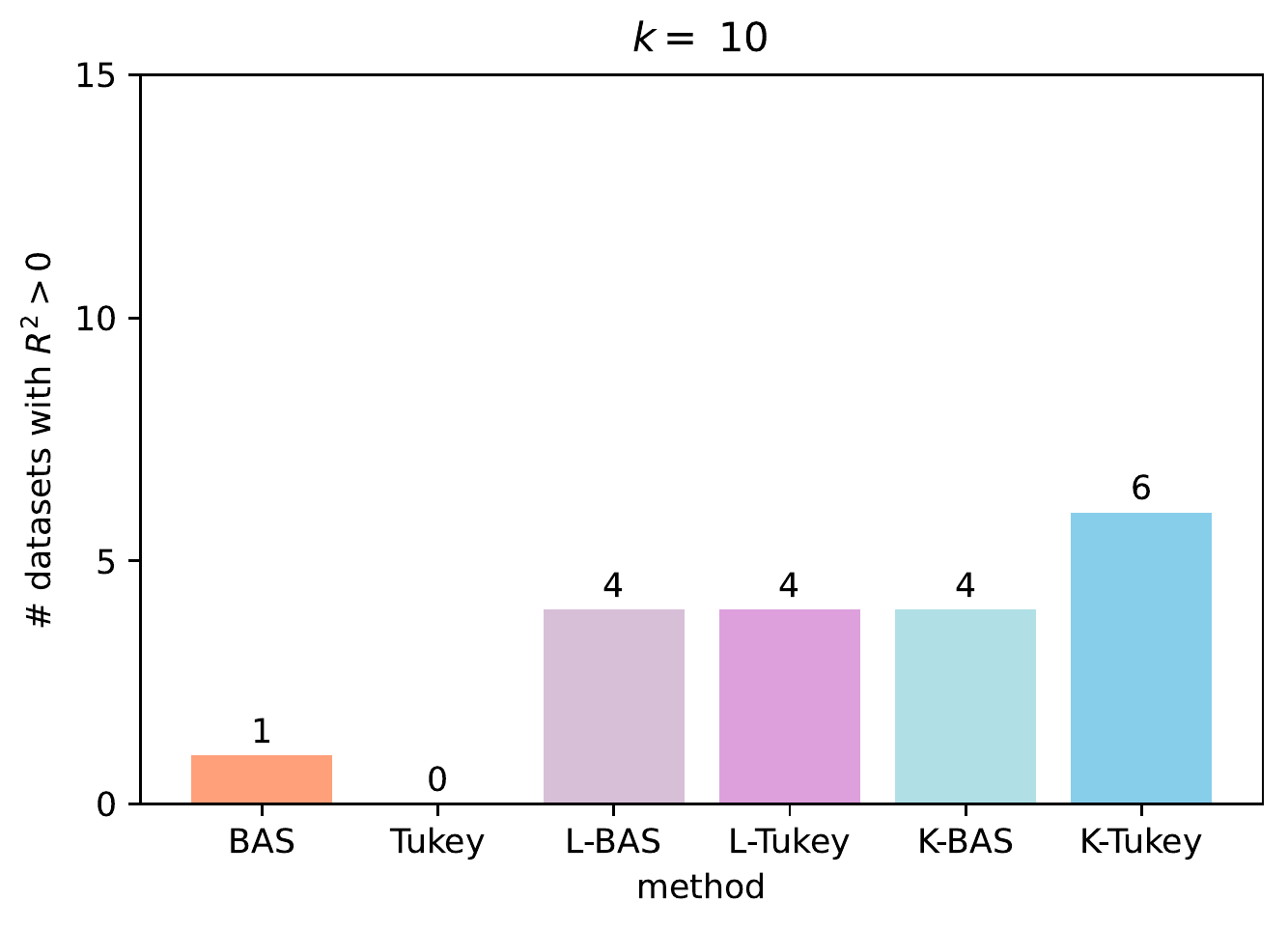}
    \caption{Plots of the number of datasets with positive $R^2$ for each private method.}
    \label{fig:r2_plots}
\end{figure}

\subsubsection{Absolute Performance}
\label{subsubsec:absolute_performance}
Second, we count the number of datasets on which the method achieves a positive median $R^2$ (\cref{fig:r2_plots}), recalling that $R^2 = 0$ is achieved by the trivial model that always predicts the mean label. The $k=5$ (left) setting again demonstrates clear trends: $\kprefix\tukey$ attains $R^2 > 0$ on 56\% of datasets, $\lprefix\tukey$ does so on 40\%, $\kprefix\bas$ does so on 28\%, and $\lprefix\bas$ on 20\%. $\kalg$ therefore consistently demonstrates stronger performance than $\sublasso$. As in the rank data, at $k=10$ the picture is less clear. However, $\kprefix\tukey$ is still best by some margin, with the remaining feature selecting methods all performing roughly equally well.

\subsection{Discussion}
\label{subsec:discussion}
A few trends are apparent from \cref{subsec:results}. First, $\kalg$ generally achieves stronger final utility than $\sublasso$, particularly for the Tukey mechanism; the effect is similar but smaller for $k=10$; and feature selection generally improves the performance of private linear regression.

\textbf{Comparing Feature Selection Algorithms}. A possible reason for $\kalg$'s improvement over $\sublasso$ is that, while $\sublasso$ takes advantage of the stability properties that Lasso exhibits in certain data regimes~\cite{KST12}, this stability does not always hold in practice. Another possible explanation is that the feature coefficients passed to $\peel$ scale with $O(n)$ for $\kalg$ and $m = O(\tfrac{n}{d})$ or $m = O(\tfrac{n}{k})$ for $\sublasso$. Both algorithm's invocations of $\peel$ add noise scaling with $O(\tfrac{k}{\eps})$, so $\kalg$'s larger scale makes it more robust to privacy-preserving noise. Finally, we emphasize that $\kalg$ achieves this even though its $O(dkn\log(n))$ runtime is asymptotically smaller than the $O(d^2n)$ runtime of $\sublasso$ in most settings.

\textbf{Choosing $\mathbf{k}$}. Next, we examine the decrease in performance from $k=5$ to $k=10$. Conceptually, past a certain point adding marginally less informative features to a private model may worsen utility due to the privacy cost of considering these features. Moving from $k=5$ to $k=10$ may cross this threshold for many of our datasets; note from \cref{fig:results_k_5} and \cref{fig:results_k_10} in the Appendix that, of the 21 datasets used for $k=10$, 86\% witness their highest private $R^2$ in the $k=5$ setting\footnote{The exceptions are datasets 361075, 361091, and 361103.}. Moreover, from $k=5$ to $k=10$ the total number of positive $R^2$ datasets across methods declines by more than 50\%, from 41 to 19, with all methods achieving positive $R^2$ less frequently at $k=10$ than $k=5$. We therefore suggest $k=5$ as the more relevant setting, and a good choice in practice.

\textbf{The Effect of Private Feature Selection}. Much work aims to circumvent generic lower bounds for privately answering queries by taking advantage of instance-specific structure \cite{BLR08, HR10, U15, BBNS19}. Similar works exist for private optimization, either by explicitly incorporating problem information~\cite{ZWB21, KRRT21} or showing that problem-agnostic algorithms can, under certain conditions, take advantage of problem structure organically~\cite{LLHI+22}. We suggest that this paper makes a similar contribution: feature selection reduces the need for algorithms like Boosted AdaSSP and the Tukey mechanism to ``waste'' privacy on computations over irrelevant features. This enables them to apply less obscuring noise to the signal contained in the selected features. The result is the significant increase in utility shown here.
\section{Conclusion}
\label{sec:conclusion}
We briefly discuss a few of $\kalg$'s limitations. First, it requires an end user to choose the number of features $k$ to select. Second, $\kalg$'s use of Kendall rank correlation may struggle when ties are intrinsic to the data's structure, e.g., when the data is categorical, as a monotonic relationship between feature and label becomes less applicable. Finally, Kendall rank correlation may fail to distinguish between a feature with a strong linear monotonic relationship with the label and a feature with a strong nonlinear monotonic relationship with the label, even though the former is more likely to be useful for linear regression. Unfortunately, it is not obvious how to incorporate relationships more sophisticated than simple monotonicity without sacrificing rank correlation's low sensitivity. Answering these questions may be an interesting avenue for future work.

Nonetheless, the results of this paper demonstrate that $\kalg$ expands the applicability of plug-and-play private linear regression algorithms while providing more utility in less time than the current state of the art. We therefore suggest that $\kalg$ presents a step forward for practical private linear regression.

\bibliographystyle{plainnat}
\bibliography{references}

\begin{thebibliography}{35}
\providecommand{\natexlab}[1]{#1}
\providecommand{\url}[1]{\texttt{#1}}
\expandafter\ifx\csname urlstyle\endcsname\relax
  \providecommand{\doi}[1]{doi: #1}\else
  \providecommand{\doi}{doi: \begingroup \urlstyle{rm}\Url}\fi

\bibitem[Amin et~al.(2019)Amin, Dick, Kulesza, Medina, and
  Vassilvitskii]{ADKMV19}
Kareem Amin, Travis Dick, Alex Kulesza, Andres~Munoz Medina, and Sergei
  Vassilvitskii.
\newblock
  \href{https://papers.nips.cc/paper_files/paper/2019/hash/4158f6d19559955bae372bb00f6204e4-Abstract.html}{Differentially
  private covariance estimation}.
\newblock \emph{Neural Information Processing Systems (NeurIPS)}, 2019.

\bibitem[Amin et~al.(2023)Amin, Joseph, Ribero, and Vassilvitskii]{AJRV23}
Kareem Amin, Matthew Joseph, Mónica Ribero, and Sergei Vassilvitskii.
\newblock \href{https://arxiv.org/abs/2208.07353}{Easy differentially private
  linear regression}.
\newblock In \emph{International Conference on Learning Representations
  (ICLR)}, 2023.

\bibitem[Anzarmou et~al.(2022)Anzarmou, Mkhadri, and Oualkacha]{AMO22}
Youssef Anzarmou, Abdallah Mkhadri, and Karim Oualkacha.
\newblock \href{https://arxiv.org/abs/2010.06688}{The Kendall interaction
  filter for variable interaction screening in high dimensional classification
  problems}.
\newblock \emph{Journal of Applied Statistics}, 2022.

\bibitem[Bhaskar et~al.(2010)Bhaskar, Laxman, Smith, and Thakurta]{BLST10}
Raghav Bhaskar, Srivatsan Laxman, Adam Smith, and Abhradeep Thakurta.
\newblock
  \href{https://citeseerx.ist.psu.edu/document?repid=rep1&type=pdf&doi=61f90e94d3e32343b5d9505fa05f11f4dc3ecb3b}{Discovering
  frequent patterns in sensitive data}.
\newblock In \emph{Knowledge Discovery and Data Mining (KDD)}, 2010.

\bibitem[B{\l}asiok et~al.(2019)B{\l}asiok, Bun, Nikolov, and Steinke]{BBNS19}
Jaroslaw B{\l}asiok, Mark Bun, Aleksandar Nikolov, and Thomas Steinke.
\newblock \href{https://arxiv.org/abs/1811.03763}{Towards instance-optimal
  private query release}.
\newblock In \emph{Symposium on Discrete Algorithms (SODA)}, 2019.

\bibitem[Blum et~al.(2008)Blum, Ligett, and Roth]{BLR08}
Avrim Blum, Katrina Ligett, and Aaron Roth.
\newblock \href{https://arxiv.org/abs/1109.2229}{A learning theory approach to
  non-interactive database privacy}.
\newblock In \emph{Symposium on the Theory of Computing (STOC)}, 2008.

\bibitem[Brown et~al.(2021)Brown, Gaboardi, Smith, Ullman, and
  Zakynthinou]{BGSUZ21}
Gavin Brown, Marco Gaboardi, Adam Smith, Jonathan Ullman, and Lydia
  Zakynthinou.
\newblock \href{https://arxiv.org/abs/2106.13329}{Covariance-Aware Private Mean
  Estimation Without Private Covariance Estimation}.
\newblock In \emph{Neural Information Processing Systems (NeurIPS)}, 2021.

\bibitem[Chaudhuri et~al.(2012)Chaudhuri, Sarwate, and Sinha]{CSS12}
Kamalika Chaudhuri, Anand Sarwate, and Kaushik Sinha.
\newblock \href{https://arxiv.org/abs/1207.2812}{Near-optimal differentially
  private principal components}.
\newblock \emph{Neural Information Processing Systems (NIPS)}, 2012.

\bibitem[Durfee and Rogers(2019)]{DR19}
David Durfee and Ryan~M Rogers.
\newblock \href{https://arxiv.org/abs/1905.04273}{Practical differentially
  private top-k selection with pay-what-you-get composition}.
\newblock \emph{Neural Information Processing Systems (NeurIPS)}, 2019.

\bibitem[Dwork et~al.(2006)Dwork, McSherry, Nissim, and Smith]{DMNS06}
Cynthia Dwork, Frank McSherry, Kobbi Nissim, and Adam Smith.
\newblock
  \href{https://iacr.org/archive/tcc2006/38760266/38760266.pdf}{Calibrating
  noise to sensitivity in private data analysis}.
\newblock In \emph{Theory of Cryptography Conference (TCC)}, 2006.

\bibitem[Dwork et~al.(2014)Dwork, Talwar, Thakurta, and Zhang]{DTTZ14}
Cynthia Dwork, Kunal Talwar, Abhradeep Thakurta, and Li~Zhang.
\newblock \href{http://kunaltalwar.org/papers/PrivatePCA.pdf}{Analyze gauss:
  optimal bounds for privacy-preserving principal component analysis}.
\newblock In \emph{Symposium on Theory of Computing (STOC)}, 2014.

\bibitem[Efron et~al.(2004)Efron, Hastie, Johnstone, and Tibshirani]{EHJT04}
Bradley Efron, Trevor Hastie, Iain Johnstone, and Robert Tibshirani.
\newblock \href{https://arxiv.org/abs/math/0406456}{Least angle regression}.
\newblock \emph{Annals of Statistics}, 2004.

\bibitem[Gillenwater et~al.(2022)Gillenwater, Joseph, Medina, and Diaz]{GJMR22}
Jennifer Gillenwater, Matthew Joseph, Andres~Munoz Medina, and Monica~Ribero
  Diaz.
\newblock \href{https://arxiv.org/abs/2201.12333}{A Joint Exponential Mechanism
  For Differentially Private Top-$k$}.
\newblock In \emph{International Conference on Machine Learning (ICML)}, 2022.

\bibitem[Google(2023)]{G23}
Google.
\newblock private\_kendall.
\newblock
  \url{https://github.com/google-research/google-research/tree/master/private_kendall},
  2023.

\bibitem[Hardt and Roth(2012)]{HR12}
Moritz Hardt and Aaron Roth.
\newblock \href{https://arxiv.org/abs/1111.0623}{Beating randomized response on
  incoherent matrices}.
\newblock In \emph{Symposium on Theory of Computing (ICML)}, 2012.

\bibitem[Hardt and Rothblum(2010)]{HR10}
Moritz Hardt and Guy~N Rothblum.
\newblock \href{https://guyrothblum.files.wordpress.com/2014/11/hr10.pdf}{A
  multiplicative weights mechanism for privacy-preserving data analysis}.
\newblock In \emph{Foundations of Computer Science (FOCS)}, 2010.

\bibitem[Jain and Thakurta(2014)]{JT14}
Prateek Jain and Abhradeep~Guha Thakurta.
\newblock \href{https://proceedings.mlr.press/v32/jain14.html}{(Near) dimension
  independent risk bounds for differentially private learning}.
\newblock In \emph{International Conference on Machine Learning (ICML)}, 2014.

\bibitem[Kairouz et~al.(2021)Kairouz, Diaz, Rush, and Thakurta]{KRRT21}
Peter Kairouz, Monica~Ribero Diaz, Keith Rush, and Abhradeep Thakurta.
\newblock \href{https://proceedings.mlr.press/v134/kairouz21a}{(Nearly)
  Dimension Independent Private ERM with AdaGrad Rates via Publicly Estimated
  Subspaces}.
\newblock In \emph{Conference on Learning Theory (COLT)}, 2021.

\bibitem[Kapralov and Talwar(2013)]{KT13}
Michael Kapralov and Kunal Talwar.
\newblock \href{https://theory.epfl.ch/kapralov/papers/dp.pdf}{On
  differentially private low rank approximation}.
\newblock In \emph{Symposium on Discrete Algorithms (SODA)}, 2013.

\bibitem[Kendall(1938)]{K38}
Maurice~G Kendall.
\newblock \href{https://www.jstor.org/stable/2332226}{A new measure of rank
  correlation}.
\newblock \emph{Biometrika}, 1938.

\bibitem[Kifer et~al.(2012)Kifer, Smith, and Thakurta]{KST12}
Daniel Kifer, Adam Smith, and Abhradeep Thakurta.
\newblock \href{http://proceedings.mlr.press/v23/kifer12/kifer12.pdf}{Private
  convex empirical risk minimization and high-dimensional regression}.
\newblock In \emph{Conference on Learning Theory (COLT)}, 2012.

\bibitem[Knight(1966)]{K66}
William~R Knight.
\newblock \href{https://www.jstor.org/stable/2282833}{A computer method for
  calculating Kendall's tau with ungrouped data}.
\newblock \emph{Journal of the American Statistical Association}, 1966.

\bibitem[Kusner et~al.(2016)Kusner, Sun, Sridharan, and Weinberger]{KSSW16}
Matt~J Kusner, Yu~Sun, Karthik Sridharan, and Kilian~Q Weinberger.
\newblock \href{https://arxiv.org/abs/1512.05469}{Private causal inference}.
\newblock In \emph{Artificial Intelligence and Statistics (AISTATS)}, 2016.

\bibitem[Li et~al.(2022)Li, Liu, Hashimoto, Inan, Kulkarni, Lee, and
  Guha~Thakurta]{LLHI+22}
Xuechen Li, Daogao Liu, Tatsunori~B Hashimoto, Huseyin~A Inan, Janardhan
  Kulkarni, Yin-Tat Lee, and Abhradeep Guha~Thakurta.
\newblock \href{https://arxiv.org/abs/2207.00160}{When Does Differentially
  Private Learning Not Suffer in High Dimensions?}
\newblock \emph{Neural Information Processing Systems (NeurIPS)}, 2022.

\bibitem[McSherry and Mironov(2009)]{MM09}
Frank McSherry and Ilya Mironov.
\newblock \href{https://dl.acm.org/doi/10.1145/1557019.1557090}{Differentially
  private recommender systems: Building privacy into the netflix prize
  contenders}.
\newblock In \emph{Knowledge Discovery and Data Mining (KDD)}, 2009.

\bibitem[Owen(1980)]{O80}
Donald~Bruce Owen.
\newblock
  \href{https://www.tandfonline.com/doi/abs/10.1080/03610918008812164?journalCode=lssp20}{A
  table of normal integrals}.
\newblock \emph{Communications in Statistics-Simulation and Computation}, 1980.

\bibitem[Sarathy et~al.(2023)Sarathy, Song, Haque, Schlatter, and
  Vadhan]{SSHSV22}
Jayshree Sarathy, Sophia Song, Audrey~Emma Haque, Tania Schlatter, and Salil
  Vadhan.
\newblock \href{https://arxiv.org/abs/2302.11775}{Don’t look at the data! How
  differential privacy reconfigures data subjects, data analysts, and the
  practices of data science}.
\newblock In \emph{Conference on Human Factors in Computing Systems (CHI)},
  2023.

\bibitem[Stoddard et~al.(2014)Stoddard, Chen, and Machanavajjhala]{SCM14}
Ben Stoddard, Yan Chen, and Ashwin Machanavajjhala.
\newblock \href{https://arxiv.org/abs/1411.5428}{Differentially private
  algorithms for empirical machine learning}.
\newblock \emph{arXiv preprint arXiv:1411.5428}, 2014.

\bibitem[Talwar et~al.(2015)Talwar, Guha~Thakurta, and Zhang]{TTZ15}
Kunal Talwar, Abhradeep Guha~Thakurta, and Li~Zhang.
\newblock
  \href{https://papers.nips.cc/paper_files/paper/2015/hash/52d080a3e172c33fd6886a37e7288491-Abstract.html}{Nearly
  optimal private lasso}.
\newblock \emph{Neural Information Processing Systems (NIPS)}, 2015.

\bibitem[Tang et~al.(2023)Tang, Aydore, Kearns, Rho, Roth, Wang, Wang, and
  Wu]{TAKRR+23}
Shuai Tang, Sergul Aydore, Michael Kearns, Saeyoung Rho, Aaron Roth, Yichen
  Wang, Yu-Xiang Wang, and Zhiwei~Steven Wu.
\newblock \href{https://arxiv.org/abs/2303.03451}{Improved Differentially
  Private Regression via Gradient Boosting}.
\newblock \emph{arXiv preprint arXiv:2303.03451}, 2023.

\bibitem[Thakurta and Smith(2013)]{ST13}
Abhradeep~Guha Thakurta and Adam Smith.
\newblock \href{https://proceedings.mlr.press/v30/Guha13.html}{Differentially
  private feature selection via stability arguments, and the robustness of the
  lasso}.
\newblock In \emph{Conference on Learning Theory (COLT)}, 2013.

\bibitem[Tukey(1975)]{T75}
John~W Tukey.
\newblock
  \href{https://www.mathunion.org/fileadmin/ICM/Proceedings/ICM1974.2/ICM1974.2.ocr.pdf}{Mathematics
  and the picturing of data}.
\newblock In \emph{International Congress of Mathematicians (IMC)}, 1975.

\bibitem[Ullman(2015)]{U15}
Jonathan Ullman.
\newblock \href{https://arxiv.org/abs/1407.1571}{Private multiplicative weights
  beyond linear queries}.
\newblock In \emph{Principles of Database Systems (PODS)}, 2015.

\bibitem[Wang(2018)]{W18}
Yu-Xiang Wang.
\newblock \href{https://arxiv.org/abs/1803.02596}{Revisiting differentially
  private linear regression: optimal and adaptive prediction \& estimation in
  unbounded domain}.
\newblock In \emph{Uncertainty in Artificial Intelligence (UAI)}, 2018.

\bibitem[Zhou et~al.(2021)Zhou, Wu, and Banerjee]{ZWB21}
Yingxue Zhou, Zhiwei~Steven Wu, and Arindam Banerjee.
\newblock \href{https://arxiv.org/abs/2007.03813}{Bypassing the ambient
  dimension: Private sgd with gradient subspace identification}.
\newblock \emph{International Conference on Learning Representations (ICLR)},
  2021.

\end{thebibliography}

\newpage

\section{Alternative Notions of Correlation}
\label{sec:correlations}

\subsection{Pearson}
The Pearson correlation between some feature $X$ and the label $Y$ is defined by
\begin{equation}
\label{eq:pearson_pop}
    r(X, Y) \coloneqq \frac{\cov{X}{Y}}{\sqrt{\var{X}\var{Y}}}.
\end{equation}
Evaluated on a sample of $n$ points, this becomes
\begin{equation*}
\label{eq:pearson}
    r(X, Y) \coloneqq \frac{\sum_{i=1}^n(X_i - \bar X)(Y_i - \bar Y)}
    {\sqrt{\sum_{i=1}^n(X_i - \bar X)^2}\sqrt{\sum_{i=1}^n (Y_i - \bar Y)^2}}.
\end{equation*}
where $\bar X$ and $\bar Y$ are sample means.
\begin{lemma}
\label{lem:pearson_bound}  
    $r \in [-1,1]$.
\end{lemma}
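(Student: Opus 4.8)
The plan is to recognize the Pearson correlation as a normalized inner product and apply the Cauchy--Schwarz inequality. First I would introduce the centered data vectors $u, v \in \mathbb{R}^n$ defined by $u_i = X_i - \bar X$ and $v_i = Y_i - \bar Y$ for each $i \in [n]$. With this notation the sample formula rewrites as
\[
    r(X, Y) = \frac{\sum_{i=1}^n u_i v_i}{\sqrt{\sum_{i=1}^n u_i^2}\,\sqrt{\sum_{i=1}^n v_i^2}} = \frac{\langle u, v\rangle}{\|u\|_2 \, \|v\|_2},
\]
so that the numerator is precisely the Euclidean inner product $\langle u, v\rangle$ and the denominator is the product of the Euclidean norms $\|u\|_2$ and $\|v\|_2$.

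The key step is then a direct invocation of the Cauchy--Schwarz inequality, which guarantees $|\langle u, v \rangle| \leq \|u\|_2 \, \|v\|_2$. Dividing both sides by the (positive) denominator yields $|r(X,Y)| \leq 1$, and therefore $r \in [-1,1]$, as claimed. For completeness I would note that equality holds exactly when $u$ and $v$ are linearly dependent, i.e.\ when the points $(X_i, Y_i)$ are perfectly collinear, recovering the familiar interpretation of $r = \pm 1$ as a perfect positive or negative linear relationship. The same argument applies verbatim to the population form \eqref{eq:pearson_pop}, where Cauchy--Schwarz in $L^2$ gives $|\cov{X}{Y}| \leq \sqrt{\var{X}\,\var{Y}}$.

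The only subtlety worth flagging is the degenerate case: if either $X$ or $Y$ is constant across the sample, the corresponding centered vector is $0$, the denominator vanishes, and $r$ is undefined rather than bounded. Since the lemma concerns the well-defined case, I would simply assume $\var{X}$ and $\var{Y}$ are nonzero (equivalently $u, v \neq 0$), under which the argument above goes through unchanged. I do not anticipate any real obstacle here; the result is a textbook consequence of Cauchy--Schwarz, and the entire content of the lemma is the observation that the correlation is the cosine of the angle between the centered feature and label vectors.
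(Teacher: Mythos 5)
Your proof is correct and takes exactly the same approach as the paper, which simply states that the bound is immediate from Cauchy--Schwarz; you have just spelled out the details (centered vectors, inner-product form, and the degenerate constant case). Nothing further is needed.
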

\begin{proof}
    This is immediate from Cauchy-Schwarz.
\end{proof}
Note that a value of $-1$ is perfect anticorrelation and a value of $1$ is perfect correlation. A downside of Pearson correlation is that it is not robust. In particular, its sensitivity is the same as its range.
\begin{lemma}
\label{lem:pearson_sensitivity}
    Pearson correlation has $\ell_1$-sensitivity $\Delta(r) = 2$. 
\end{lemma}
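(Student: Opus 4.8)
The plan is to split the claim into a matching upper and lower bound. For the upper bound I would appeal directly to \cref{lem:pearson_bound}: since $r(X,Y) \in [-1,1]$ for every database, any two neighbors $D \sim D'$ satisfy $|r(D) - r(D')| \le 1 - (-1) = 2$, so $\Delta(r) \le 2$ with no computation at all. All of the real work therefore lies in showing this bound is tight, i.e.\ that $|r(D) - r(D')|$ can be pushed arbitrarily close to $2$ by a single add/remove operation.

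For the lower bound I would exploit the well-known fact that Pearson correlation is dominated by high-leverage outliers. Concretely, fix $D = \{(1,1),(2,2)\}$, a pair of points on the line $y = x$, so that $r(D) = 1$. Form the neighbor $D' = D \cup \{(t,-t)\}$ by adding a single point far out along the line $y = -x$. As $t \to \infty$, the added point dominates both sample variances and the covariance: its centered coordinates are $\approx (2t/3, -2t/3)$, so its contribution to the numerator is $\approx -4t^2/9$ while each centered sum of squares is also $\approx 4t^2/9$. Hence $r(D') \to -1$ and $|r(D) - r(D')| \to 2$. Taking the supremum over neighbors then gives $\Delta(r) \ge 2$, which combined with the upper bound yields $\Delta(r) = 2$.

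The one subtlety I would flag is that the value $2$ is approached but not exactly attained: achieving $r(D) = 1$ and $r(D') = -1$ simultaneously would force every shared point to lie on both a positive- and a negative-slope line, hence on their unique intersection, leaving at most one shared point and thus a singleton database on which $r$ is undefined. Since the $\ell_1$-sensitivity is a supremum over neighbors, the limiting construction above nonetheless certifies $\Delta(r) = 2$; the only care needed is to verify the $r(D') \to -1$ limit, e.g.\ by dividing numerator and denominator by $t^2$ and tracking the lower-order terms, which is routine. This step---confirming that the outlier drives the correlation all the way to $-1$ rather than merely reducing it---is the main (though mild) obstacle.
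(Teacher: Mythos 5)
Your proposal is correct and follows essentially the same route as the paper: the upper bound $\Delta(r)\le 2$ from the range $[-1,1]$, plus a lower-bound construction that adds a single dominating outlier on the anti-diagonal to a perfectly correlated two-point database and lets the outlier's magnitude tend to infinity so that $r \to -1$. Your observation that the supremum of $2$ is approached but not attained matches the paper's own phrasing ("the approximation is increasingly accurate as $c$ grows") and is a fair, if minor, point of added care.
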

\begin{proof}
    Consider neighboring databases $D = \{(-1, -1), (1,1)\}$ and $D' = \{(-1, -1), (1,1), (-c, c)\}$. $r(D) = 1$, but
    \begin{align*}
        r(D') =&\ \frac{(-1 + \tfrac{c}{3})(-1 - \tfrac{c}{3}) + (1 + \tfrac{c}{3})(1 - \tfrac{c}{3}) + (-\frac{2c}{3} \cdot \tfrac{2c}{3})}
        {\sqrt{(-1 + \tfrac{c}{3})^2 + (1 + \tfrac{c}{3})^2 + (-\tfrac{2c}{3})^2}
        \sqrt{(-1 - \tfrac{c}{3})^2 + (1 - \tfrac{c}{3})^2 + (\tfrac{2c}{3})^2}} \\
        \approx&\ \frac{-(\tfrac{2c}{3})^2}{(\tfrac{2c}{3})^2} = -1 
    \end{align*}
    where the approximation is increasingly accurate as $c$ grows.
\end{proof}

\subsection{Spearman}
Spearman rank correlation is Pearson correlation applied to rank variables.

\begin{definition}
\label{def:spearman}
    Given data points $X_1, \ldots, X_n \in \mathbb{R}$, the corresponding \emph{rank variables} $R(X_1), \ldots, R(X_n)$ are defined by setting $R(X_i)$ to the position of $X_i$ when $X_1, \ldots, X_n$ are sorted in descending order. Given data $(X, Y)$, the \emph{Spearman rank correlation} is $\rho(X, Y) \coloneqq r(R(X), R(Y))$.
\end{definition}

For example, given database $D = \{(0, 1), (1, 0), (2, 3)\}$, its Spearman rank correlation is
\[
    \rho(D) \coloneqq r(\{(3,2), (2,3), (1,1)\}).
\]
A useful privacy property of rank is that it does not depend on data scale. Moreover, if there are no ties then Spearman rank correlation admits a simple closed form.
\begin{lemma}
\label{lem:spearman_distinct}
    $\rho(X, Y) = 1 - \frac{6\sum_{i=1}^n(R(X_i) - R(Y_i))^2}{n(n^2-1)}$.
\end{lemma}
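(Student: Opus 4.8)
The plan is to reduce everything to computing the Pearson correlation between two vectors that are each a permutation of $\{1, \ldots, n\}$. By \cref{def:spearman}, $\rho(X,Y) = r(R(X), R(Y))$; write $a_i = R(X_i)$ and $b_i = R(Y_i)$. Because the data has no ties, each of $R(X)$ and $R(Y)$ assigns the labels $1, \ldots, n$ to the points in some order, so both $(a_i)$ and $(b_i)$ are permutations of $\{1, \ldots, n\}$. The key consequence is that the first and second moments of each rank vector are \emph{fixed}, independent of the underlying data: the sample mean is $\bar a = \bar b = \frac{n+1}{2}$, and using $\sum_{k=1}^n k^2 = \frac{n(n+1)(2n+1)}{6}$ one gets $\sum_i (a_i - \bar a)^2 = \sum_i (b_i - \bar b)^2 = \frac{n(n^2-1)}{12}$.

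With the two denominator terms of the Pearson formula pinned down, it remains only to rewrite the numerator (the covariance term) in terms of $\sum_i (R(X_i) - R(Y_i))^2$. Here I would exploit the fact that $\bar a = \bar b$, so the differences $d_i = a_i - b_i$ equal $(a_i - \bar a) - (b_i - \bar b)$. Expanding $\sum_i d_i^2$ and rearranging yields
\[
    \sum_i (a_i - \bar a)(b_i - \bar b) = \tfrac{1}{2}\left(\sum_i (a_i - \bar a)^2 + \sum_i (b_i - \bar b)^2 - \sum_i d_i^2\right) = \tfrac{1}{2}\left(\tfrac{n(n^2-1)}{6} - \sum_i d_i^2\right).
\]

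Substituting this numerator and the two equal denominator terms into the definition of $r$, the factor $\frac{n(n^2-1)}{12}$ cancels cleanly and leaves $1 - \frac{6\sum_i d_i^2}{n(n^2-1)}$, which is the claim. There is no real obstacle here: the argument is a routine moment computation, and the only point requiring care is the no-ties hypothesis, which is exactly what guarantees that the ranks form genuine permutations (so that the moment identities hold with equality rather than merely approximately, and so that the two denominator terms coincide). I would therefore state that assumption explicitly at the outset and invoke it precisely where the permutation structure is used.
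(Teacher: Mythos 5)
Your derivation is correct and complete: the moment computations for rank vectors ($\bar a = \bar b = \tfrac{n+1}{2}$, $\sum_i (a_i - \bar a)^2 = \tfrac{n(n^2-1)}{12}$) and the polarization identity for the covariance term all check out, and the no-ties hypothesis is invoked exactly where it is needed. The paper states this lemma without proof, treating it as a standard closed form for Spearman correlation, and yours is precisely the standard argument one would supply.
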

If we consider adding a ``perfectly unsorted'' data point with rank variables $(1, n+1)$ to a perfectly sorted database with rank variables $\{(1,1), (2,2), \ldots, (n,n)\}$, $\rho$ changes from 1 to $1 - \frac{6(n+n^2)}{n(n^2-1)}$. The sensitivity's dependence on $n$ complicates its usage with add-remove privacy. Nonetheless, both Spearman and Kendall correlation's use of rank makes them relatively easy to compute privately, and as the two methods are often used interchangeably in practice, we opt for Kendall rank correlation for simplicity.

\narxiv{\section{Deferred Proofs From \cref{subsec:utility}}
\label{sec:utility_proofs}
We first restate and prove \cref{lem:pop_tau}.
\PopTau*
\begin{proof}
    Recall from \cref{def:kendall} the population formulation of $\tau$,
    \begin{equation}
    \label{eq:tau}
        \tau(X, Y) = \P{}{(X - X')(Y - Y') > 0} - \P{}{(X - X')(Y - Y') < 0}
    \end{equation}
    where $X$ and $X'$ are i.i.d., as are $Y$ and $Y'$. In our case, we can define $Z = \sum_{j \neq j^*} \beta_{j}X_j + \xi$ and rewrite the first term of \cref{eq:tau} for our setting as
    \begin{equation}
    \label{eq:tau_1}
        \int_{0}^{\infty} f_{X_{j^*} - X_{j^*}'}(t) \cdot [1 - F_{Z-Z'}(-\beta_{j^*}t)] dt
        + \int_{-\infty}^0 f_{X_{j^*} - X_{j^*}'}(t) \cdot F_{Z-Z'}(-\beta_{j^*}t) dt.
    \end{equation}
    where $f_A$ and $F_A$ denote densities and cumulative distribution functions of random variable $A$, respectively. Since the relevant distributions are all Gaussian, $X_{j^*} - X_{j^*}' \sim N(0, 2\sigma_{j^*}^2)$ and $Z - Z' \sim N(0, 2[\sum_{j \neq j^*} \beta_j^2\sigma_j^2 + \sigma_e^2])$. For neatness, shorthand $\sigma^2 = 2\sigma_{j^*}^2$ and $\sigma_{-1}^2 = 2[\sum_{j \neq j^*} \beta_j^2\sigma_j^2 + \sigma_e^2]$. Then if we let $\phi$ denote the PDF of a standard Gaussian and $\Phi$ the CDF of a standard Gaussian, \cref{eq:tau_1} becomes
\begin{align*}
    &\ \int_0^{\infty} \frac{1}{\sigma} \phi(t/\sigma) \cdot [1 - \Phi(-\beta_{j^*}t / \sigma_{-1})] dt
    + \int_{-\infty}^0 \frac{1}{\sigma}\phi(t/\sigma) \Phi(-\beta_{j^*}t / \sigma_{-1}) dt \\
    =&\ \frac{1}{\sigma}\left[\int_0^\infty \phi(t/\sigma) \Phi(\beta_{j^*}t/\sigma_{-1}) dt
    + \int_{-\infty}^0 \phi(-t/\sigma) \Phi(-\beta_{j^*}t / \sigma_{-1}) dt \right] \\
    =&\ \frac{2}{\sigma}\int_0^\infty \phi(t/\sigma) \Phi(\beta_{j^*}t/\sigma_{-1}) dt.
\end{align*}
We can similarly analyze the second term of \cref{eq:tau} to get
\begin{align*}
    &\ \int_0^\infty f_{X_{j^*} - X_{j^*}'}(t) \cdot F_{Z - Z'}(-\beta_{j^*}t) dt + \int_{-\infty}^0 f_{X_{j^*} - X_{j^*}'}(t) \cdot [1 - F_{Z-Z'}(-\beta_{j^*}t)] dt \\
    =&\ \int_0^\infty \frac{1}{\sigma}\phi(t/\sigma) \Phi(-\beta_{j^*}t/\sigma_{-1}) dt
    + \int_{-\infty}^0 \frac{1}{\sigma}\phi(t/\sigma) \Phi(\beta_{j^*}t / \sigma_{-1}) dt \\
    =&\ \frac{2}{\sigma} \int_{-\infty}^0 \phi(t/\sigma) \Phi(\beta_{j^*}t/\sigma_{-1}) dt.
\end{align*}
Using both results, we get
\begin{align*}
    \tau(X_1, Y) =&\ \frac{2}{\sigma}\left[\int_0^\infty \phi(t/\sigma) \Phi(\beta_{j^*}t/\sigma_{-1}) dt - \int_{-\infty}^0 \phi(t/\sigma) \Phi(\beta_{j^*}t/\sigma_{-1}) dt\right] \\
    =&\ \frac{2}{\sigma}\left[\int_0^\infty \phi(t/\sigma)\Phi(\beta_{j^*}t/\sigma_{-1})dt - \int_0^\infty \phi(t/\sigma)[1 - \Phi(\beta_{j^*}t/\sigma_{-1})]dt\right] \\
    =&\ \frac{4}{\sigma}\int_0^\infty \phi(t/\sigma)\Phi(\beta_{j^*}t/\sigma_{-1})dt - 1 \\
    =&\ \frac{4}{\sigma} \cdot \frac{\sigma}{2\pi}\left(\frac{\pi}{2} + \arctan{\frac{\beta_{j^*}\sigma}{\sigma_{-1}}}\right) - 1 \\
    =&\ \frac{2}{\pi} \cdot \arctan{\frac{\beta_{j^*}\sigma}{\sigma_{-1}}}.
\end{align*}
where the third equality uses $\int_0^\infty \phi(t/\sigma) dt = \frac{\sigma}{2}$ and the fourth equality comes from Equation 1,010.4 of~\citet{O80}. Substituting in the values of $\sigma$ and $\sigma_{-1}$ yields the claim.
\end{proof}
Next is \cref{lem:gumbel_tail}.
\GumbelTail*
\narxiv{\begin{proof}
    Recall from \cref{def:gumbel} that $\gumbel{b}$ has density $f(x) = \frac{1}{b} \cdot \exp\left(-\frac{x}{b} - e^{-x/b}\right)$. Then $\frac{f(x)}{f(-x)} = \exp\left(-\frac{x}{b} - e^{-x/b} - \frac{x}{b} + e^{x/b}\right)$. For $z \geq 0$, by $e^z = \sum_{n=0}^\infty \frac{z^n}{n!}$, 
    \[
        2z + e^{-z} \leq 2z + 1 - z + \frac{z^2}{2} = 1 + z + \frac{z^2}{2} \leq e^z
    \]
    so since $b \geq 0$, $f(x) \geq f(-x)$ for $x \geq 0$. Letting $F(z) = \exp(-\exp(-z/b))$ denote the CDF for $\gumbel{b}$, it follows that for $z \geq 0$, $1 - F(z) \geq F(-z)$. Thus the probability that $\max_{j \in [d]} |X_j|$ exceeds $t$ is upper bounded by $2d(1 - F(t))$. The claim then follows from rearranging the inequality
    \begin{align*}
        2d(1 - F(t)) \leq&\ \eta \\
        1 - \frac{\eta}{2d} \leq&\ F(t) \\
        1 - \frac{\eta}{2d} \leq&\ \exp(-\exp(-t/b)) \\
        \ln\left(1 - \frac{\eta}{2d}\right) \leq&\  -\exp(-t/b) \\
       -b\ln\left(-\ln\left(1 - \frac{\eta}{2d}\right)\right) \leq&\ t \\
       b \ln\left(\frac{1}{-\ln(1 - \eta/[2d])}\right) \leq&\ t \\
    \end{align*}
    and using $-\ln(1-x) = \sum_{i=1}^\infty x^i/i \geq x$.
\end{proof}}}

\section{Datasets}
A summary of the datasets used in our experiments appears in \cref{fig:datasets}.

\begin{figure}[ht]
\begin{center}
\begin{tabular}{|c|c|c|c|c|c|}
\hline
OpenML Task ID & n & d & $\lfloor n/d \rfloor$ & $\lfloor n/5 \rfloor$ & $\lfloor n/10 \rfloor$ \\ 
\hline
\hline
361072 & 8192 & 22 & 372 & 1638 & 819 \\ 
\hline
361073 & 15000 & 27 & 555 & 3000 & 1500 \\ 
\hline
361074 & 16599 & 17 & 976 & 3319 & 1659 \\ 
\hline
361075 & 7797 & 614 & 12 & 1559 & 779 \\ 
\hline
361076 & 6497 & 12 & 541 & 1299 & 649 \\ 
\hline
361077 & 13750 & 34 & 404 & 2750 & 1375 \\ 
\hline
361078 & 20640 & 9 & 2293 & 4128 & 2064 \\ 
\hline
361079 & 22784 & 17 & 1340 & 4556 & 2278 \\ 
\hline
361085 & 10081 & 7 & 1440 & 2016 & 1008 \\ 
\hline
361087 & 13932 & 14 & 995 & 2786 & 1393 \\ 
\hline
361088 & 21263 & 80 & 265 & 4252 & 2126 \\ 
\hline
361089 & 20640 & 9 & 2293 & 4128 & 2064 \\ 
\hline
361091 & 515345 & 91 & 5663 & 103069 & 51534 \\ 
\hline
361092 & 8885 & 83 & 107 & 1777 & 888 \\ 
\hline
361093 & 4052 & 13 & 311 & 810 & 405 \\ 
\hline
361094 & 8641 & 6 & 1440 & 1728 & 864 \\ 
\hline
361095 & 166821 & 24 & 6950 & 33364 & 16682 \\ 
\hline
361096 & 53940 & 27 & 1997 & 10788 & 5394 \\ 
\hline
361098 & 10692 & 18 & 594 & 2138 & 1069 \\ 
\hline
361099 & 17379 & 21 & 827 & 3475 & 1737 \\ 
\hline
361100 & 39644 & 74 & 535 & 7928 & 3964 \\ 
\hline
361101 & 581835 & 32 & 18182 & 116367 & 58183 \\ 
\hline
361102 & 21613 & 20 & 1080 & 4322 & 2161 \\ 
\hline
361103 & 394299 & 27 & 14603 & 78859 & 39429 \\ 
\hline
361104 & 241600 & 16 & 15100 & 48320 & 24160 \\ 
\hline
\end{tabular}
\end{center}
\caption{Parameters of the 25 datasets used in our experiments.}
\label{fig:datasets}
\end{figure}
We briefly discuss the role of the intercept in these datasets. Throughout, we explicitly add an intercept feature (constant 1) to each vector. Where feature selection is applied, we explicitly remove the intercept feature during feature selection and then add it back to the $k$ selected features afterward. The resulting regression problem therefore has dimension $k+1$. We do this to avoid spending privacy budget selecting the intercept feature.

\section{Modified PTR Lemma}
\label{sec:mod_ptr}
This section describes a simple tightening of Lemma 3.6 from~\citet{AJRV23} (which is itself a small modification of Lemma 3.8 from~\citet{BGSUZ21}).  $\tukey$ uses the result as its propose-test-release (PTR) check, so tightening it makes the check easier to pass. Proving the result will require introducing details of the $\tukey$ algorithm. The following exposition aims to keep this document both self-contained and brief; the interested reader should consult the expanded treatment given by~\citet{AJRV23} for further details.

Tukey depth was  introduced by~\citet{T75}.~\citet{AJRV23} used an approximation for efficiency. Roughly, Tukey depth is a notion of depth for a collection of points in space. (Exact) Tukey depth is evaluated over all possible directions in $\mathbb{R}^d$, while approximate Tukey depth is evaluated only over axis-aligned directions.
\begin{definition}[\cite{T75, AJRV23}]
\label{def:tukey}
    A \emph{halfspace} $h_v$ is defined by a vector $v \in \mathbb{R}^d$, $h_v = \{y \in \mathbb{R}^d \mid \langle v, y \rangle \geq 0\}$. Let $E = \{e_1, ..., e_d\}$ be the canonical basis for $\mathbb{R}^d$ and let $D \subset \mathbb{R}^d$. The \emph{approximate Tukey depth} of a point $y \in \mathbb{R}^d$ with respect to $D$, denoted $\tilde T_D(y)$, is the minimum number of points in $D$ in any of the $2d$ halfspaces determined by $E$ containing $y$,
\[
    \tilde T_D(y) = \min_{v \in \pm E \text{ s.t. } y \in h_v} \sum_{x \in D} \indic{x \in h_v}.
\]
\end{definition}
At a high level, Lemma 3.6 from~\citet{AJRV23} is a statement about the volumes of regions of different depths. The next step is to formally define these volumes.

\begin{definition}[\cite{AJRV23}]
\label{def:Vs}
    Given database $D$, define $S_{i,D} = \{ y \in \mathbb{R}^d \mid \tilde T_D(y) \geq i\}$ to be the set of points with approximate Tukey depth at least $i$ in $D$ and $V_{i, D} = \vol(S_{i,D})$ to be the volume of that set. When $D$ is clear from context, we write $S_i$ and $V_i$ for brevity. We also use $w_D(V_{d,D}) \coloneqq \int_{S_{d,D}} \exp(\eps \cdot \tilde T_D(y)) dy$ to denote the weight assigned to $V_{d,D}$ by an exponential mechanism whose score function is $\tilde T_D$.
\end{definition}

\citet{AJRV23} define a family of mechanisms $A_1, A_2$, \dots, where $A_t$ runs the exponential mechanism to choose a point of approximately maximal Tukey depth, but restricted to the domain of points with Tukey depth at least $t$. 
Since this domain is a data-dependent quantity, they use the PTR framework to select a safe depth $t$.
We briefly recall the definitions of ``safe'' and ``unsafe'' databases given by~\citet{BGSUZ21}, together with the key PTR result from \citet{AJRV23}.

\begin{definition}[Definitions 2.1 and 3.1~\cite{BGSUZ21}]
\label{def:safety}
    Two distributions $\calP, \calQ$ over domain $\calW$ are \emph{$(\eps, \delta)$-indistinguishable}, denoted $\calP \approx_{\eps, \delta} \calQ$, if for any measurable subset $W \subset \calW$,
    \[
        \P{w \sim \calP}{w \in W} \leq e^\eps\P{w \sim \calQ}{w \in W} + \delta \text{ and } \P{w \sim \calQ}{w \in W} \leq e^\eps\P{w \sim \calP}{w \in W} + \delta.
    \]
    Database $D$ is $(\eps, \delta, t)$-safe if for all neighboring $D' \sim D$, we have $A_t(D) \approx_{\eps, \delta} A_t(D')$. Let $\safe{\eps, \delta, t}$ be the set of safe databases, and let $\unsafe{\eps, \delta, t}$ be its complement.
\end{definition}

We can now restate Lemma 3.6 from~\citet{AJRV23}. Informally, it states that if the volume of an ``outer'' region of Tukey depth is not much larger than the volume of an ``inner'' region, the difference in depth between the two is a lower bound on the distance to an unsafe database. For the purpose of this paper, $\tukey$ applies this result by finding such a $k$, adding noise for privacy, and checking that the resulting distance to unsafety is large enough that subsequent steps will be privacy-safe.

\begin{lemma}
\label{lem:ptr_original}
    Define $M(D)$ to be a mechanism that receives as input database $D$ and computes the largest $k \in \{0, \ldots, t-1\}$ such that there exists $g > 0$ where
    \[
        \frac{V_{t-k-1, D}}{V_{t+k+g+1,D}} \cdot e^{-\eps g/2} \leq \delta
    \]
    or outputs $-1$ if the inequality does not hold for any such $k$. Then for arbitrary $D$
    \begin{enumerate}
        \item $M$ is 1-sensitive, and
        \item for all $z \in \unsafe{\eps, 4e^\eps\delta, t}$, $d_H(D, z) > M(D)$.
    \end{enumerate}
\end{lemma}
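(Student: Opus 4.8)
The plan is to prove the two claims separately, relying on a single structural fact about approximate Tukey depth: adding or removing one point changes $\tilde T_D(y)$ by at most $1$ for every $y$, so for any database $z$ with $d_H(D,z)=\ell$ we have the nesting $S_{t+\ell,D}\subseteq S_{t,z}\subseteq S_{t-\ell,D}$ and hence volume bounds $V_{t+\ell,D}\le V_{i,z}\le V_{t-\ell,D}$ at the corresponding thresholds. In particular, for neighbors $D\sim D'$ this gives $V_{i+1,D}\le V_{i,D'}\le V_{i-1,D}$ for every $i$.

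For Part 1 (1-sensitivity), suppose $M(D)=k$ with witness $g>0$, so $\frac{V_{t-k-1,D}}{V_{t+k+g+1,D}}e^{-\eps g/2}\le\delta$. I would show $M(D')\ge k-1$ by reusing the same gap $g'=g$ at index $k'=k-1$: the neighbor volume bounds give $V_{t-k,D'}\le V_{t-k-1,D}$ and $V_{t+k+g,D'}\ge V_{t+k+g+1,D}$, so the defining ratio for $(k',g')$ at $D'$ is at most the ratio for $(k,g)$ at $D$, hence still $\le\delta$. Swapping the roles of $D$ and $D'$ gives the reverse inequality, and the boundary cases (output $-1$, or $k=0$) follow from the same monotone comparison, yielding $|M(D)-M(D')|\le1$.

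For Part 2 I would prove the contrapositive: if $d_H(D,z)\le k=M(D)$, then $z\in\safe{\eps,4e^\eps\delta,t}$, i.e. $A_t(z)\approx_{\eps,4e^\eps\delta}A_t(z')$ for every neighbor $z'$ of $z$ (so $d_H(D,z')\le k+1$). Writing $A_t(\cdot)$ as the exponential mechanism with density proportional to $e^{\eps\tilde T_\cdot(y)}$ restricted to $S_{t,\cdot}$, I would split the output space into a robust inner region and a thin fringe. On the overlap $S_{t,z}\cap S_{t,z'}$ the bound $|\tilde T_z-\tilde T_{z'}|\le1$ controls the density ratio up to a factor $e^{\eps}$, while the normalizers are compared by the same depth-shift argument, introducing a second $e^\eps$ and a small additive correction. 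The additive error comes from the ``cliff'' $S_{t,z}\setminus S_{t,z'}$, on which $\tilde T_z\equiv t$ and whose volume is at most $V_{t-k,D}\le V_{t-k-1,D}$, so its weight is at most $e^{\eps t}V_{t-k-1,D}$. Lower-bounding the normalizer $W_z$ by the weight of the inner region $S_{t+k+g+1,D}$, where $\tilde T_z\ge t+g+1$, gives $W_z\ge e^{\eps(t+g+1)}V_{t+k+g+1,D}$, so the cliff probability is at most $e^{-\eps(g+1)}\,\frac{V_{t-k-1,D}}{V_{t+k+g+1,D}}\le e^{-\eps(g/2+1)}\delta\le\delta$ by the witness inequality. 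Assembling the multiplicative factors together with the fringe and normalizer corrections across both directions of indistinguishability and over all neighbors $z'$ yields the $(\eps,4e^\eps\delta)$ guarantee.

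The main obstacle is the constant bookkeeping in Part 2: the factor $4e^\eps$ and the specific exponent $e^{-\eps g/2}$ in the hypothesis are exactly what must be tracked through (i) the score-sensitivity factor, (ii) the normalizing-constant ratio $W_{z'}/W_z$, and (iii) the cliff weight, summed over both directions of the $\approx_{\eps,\cdot}$ relation. Obtaining the tightened constant rather than a looser one requires splitting the privacy slack carefully between the normalizer comparison and the cliff bound, using $g/2$ in place of $g$ to absorb one of these pieces; this accounting, rather than any conceptual step, is where the real work lies.
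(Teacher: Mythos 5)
The first thing to note is that the paper does not actually prove \cref{lem:ptr_original}: it is quoted from prior work (Lemma~3.6 of AJRV23, itself a modification of Lemma~3.8 of BGSUZ21), and the paper only supplies a proof for the replacement \cref{lem:ptr_new}. Your Part~1 is sound and matches the structure of the paper's Part~1 for that modified lemma: the nesting $S_{p+1,D}\subseteq S_{p,D'}\subseteq S_{p-1,D}$ for neighbors lets you transport a witness $(k,g)$ for $D$ to a witness $(k-1,g)$ for $D'$ and conversely, giving $|M(D)-M(D')|\le 1$ including the boundary cases.

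The gap is in Part~2, and it is concrete: your multiplicative accounting yields $e^{2\eps}$, not $e^{\eps}$. You bound the pointwise density ratio on the overlap by $e^{\eps}$ (from $|\tilde T_z-\tilde T_{z'}|\le 1$) and then multiply by a second $e^{\eps}$ from the normalizer comparison. That is the generic exponential-mechanism analysis for a $1$-sensitive score with weight $e^{\eps\cdot\mathrm{score}}$, and it proves only $(2\eps,\cdot)$-indistinguishability, i.e., membership in $\safe{2\eps,\cdot,t}$ rather than $\safe{\eps,\cdot,t}$ as the lemma requires. The missing ingredient is the monotonicity of approximate Tukey depth under point addition: taking WLOG $z'=z\cup\{x\}$, one has $\tilde T_z(y)\le\tilde T_{z'}(y)\le\tilde T_z(y)+1$ pointwise, so the unnormalized weight and the normalizer move in the same direction and only one factor of $e^{\eps}$ survives in each direction of the ratio. (The paper flags exactly this when adapting BGSUZ21: ``the 2 is dropped here because approximate Tukey depth is monotonic.'') Relatedly, your closing remark that the $e^{-\eps g/2}$ in the hypothesis is what absorbs one of the multiplicative pieces is a misdiagnosis: $g$ enters only your additive (cliff) bound, where your own computation already has slack to spare, and it cannot repair the $e^{2\eps}$ versus $e^{\eps}$ discrepancy. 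The rest of your additive accounting --- the cliff has score exactly $t$, volume at most $V_{t-k-1,D}$, and the normalizer is at least $e^{\eps(t+g+1)}V_{t+k+g+1,D}$, giving cliff mass at most $\delta$ --- is correct, though the assembly into the precise constant $4e^{\eps}\delta$ over both directions and all neighbors $z'$ is asserted rather than carried out.
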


We provide a drop-in replacement for \Cref{lem:ptr_original} that slightly weakens the requirement placed on $k$.

\begin{lemma}
\label{lem:ptr_new}
    Define $M(D)$ to be a mechanism that receives as input database $D$ and computes the largest $k \in \{0, \ldots, t-1\}$ such that
    \[
        \frac{V_{t-k-1, D}}{w_D(V_{t+k-1,D})} \cdot e^{\eps (t+k+1)} \leq \delta
    \]
    or outputs $-1$ if the inequality does not hold for any such $k$. Then for arbitrary $D$
    \begin{enumerate}
        \item $M$ is 1-sensitive, and
        \item for all $z \in \unsafe{\eps, 4e^\eps\delta, t}$, $d_H(D, z) > M(D)$.
    \end{enumerate}
\end{lemma}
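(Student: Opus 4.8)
The plan is to follow the proof of \cref{lem:ptr_original} (AJRV's Lemma 3.6) essentially verbatim, changing only the object that certifies depth in the deep region: where the original tracks a single deeper volume $V_{t+k+g+1,D}$ together with a free gap parameter $g$ and its $e^{-\eps g/2}$ discount, the replacement uses the exponential-mechanism weight $w_D(V_{t+k-1,D}) = \int_{S_{t+k-1,D}} \exp(\eps \tilde T_D(y))\,dy$. Because this weight already aggregates the exponentially larger contribution of every shell deeper than $t+k-1$, it performs the job of the $g$-indexed volume without our having to optimize $g$, which is exactly what slackens the requirement on $k$. Everything rests on one structural fact: adding or removing a point changes $\tilde T_D(y)$ by at most $1$ at every $y$, so for neighbors $A \sim B$ the superlevel sets interlace by one index, giving bounds of the form $V_{s,B} \le V_{s-1,A}$ and $V_{s+1,A}\le V_{s,B}\le V_{s,A}$, together with the weighted analogues $w_B(V_{s,B}) \ge e^{-\eps} w_A(V_{s+1,A})$ and $w_B(V_{s,B}) \le e^{\eps} w_A(V_{s-1,A})$.

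For $1$-sensitivity I would show that if the defining inequality holds for $D$ at level $k$, then it holds for every neighbor $D'$ at level $k-1$; by symmetry this yields $|M(D)-M(D')|\le 1$. The level-$(k-1)$ inequality for $D'$ has numerator $V_{t-k,D'}$, denominator $w_{D'}(V_{t+k-2,D'})$, and prefactor $e^{\eps(t+k)}$. Applying the interlacing bounds turns the numerator into $\le V_{t-k-1,D}$ and the denominator into $\ge e^{-\eps} w_D(V_{t+k-1,D})$, and the leftover $e^{\eps}$ is absorbed by the gap between the prefactors $e^{\eps(t+k)}$ and $e^{\eps(t+k+1)}$; hence the level-$(k-1)$ quantity for $D'$ is at most the level-$k$ quantity for $D$, which is $\le\delta$. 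The prefactor $e^{\eps(t+k+1)}$ and the weighted denominator are calibrated precisely so that a one-index shift in both numerator and denominator composes to no net change.

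For the distance-to-unsafety claim I would argue the contrapositive: every $z$ with $d_H(D,z)\le k = M(D)$ is $(\eps, 4e^\eps\delta, t)$-safe. Fix a neighbor $z'$ of $z$, and let $z_-,z_+$ be the smaller and larger of the pair, so $S_{t,z_-}\subseteq S_{t,z_+}$ and the boundary shell $U := S_{t,z_+}\setminus S_{t,z_-}$ carries depth exactly $t$ under $z_+$. On the common support $S_{t,z_-}$ the two unnormalized scores differ by at most the factor $e^{\pm\eps}$ from the depth shift; the remaining normalizer mismatch, together with the mass that the shallower database cannot place on $U$, forms the additive term and is bounded by $e^{\eps t}\vol(U)/w_{z_-}(V_{t,z_-})$. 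I would then translate to $D$-quantities by the $\le k$-step interlacing: $\vol(U)\le V_{t,z_+}\le V_{t-k-1,D}$ and $w_{z_-}(V_{t,z_-})\gtrsim e^{-\eps(k+1)}w_D(V_{t+k-1,D})$, so the additive term is of order $e^{\eps(t+k+1)}V_{t-k-1,D}/w_D(V_{t+k-1,D})$, which the hypothesis bounds by $\delta$. Summing the two indistinguishability directions and the two orientations (add versus remove) of the neighbor relation supplies the constant, matching the $(\eps, 4e^\eps\delta)$ parameter in the claim.

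The main obstacle is the index bookkeeping in the last step: verifying that translating both the shell volume $\vol(U)$ and the core weight $w_{z_-}(V_{t,z_-})$ from the neighbor pair back to $D$ yields precisely the indices $t-k-1$ and $t+k-1$ and the prefactor $e^{\eps(t+k+1)}$ rather than adjacent indices, and confirming that the looser of the two orientations still lands inside the stated bound. This is where the weighted denominator earns its keep: because $w_D(V_{t+k-1,D})$ already contains the heavy weight of every deeper shell, the crude interlacing inequalities, each of which loses an index or an $e^{\eps}$ factor, can be absorbed without reintroducing the original lemma's free parameter $g$.
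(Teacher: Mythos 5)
Your Part 1 argument matches the paper's: both establish the one-index interlacing $S_{p+1,D} \subset S_{p,D} \subset S_{p,D'} \subset S_{p-1,D}$ for $D' = D \cup \{x\}$, the corresponding volume chain, and the weighted chain $w_D(V_{p+1,D}) \leq w_D(V_{p,D}) \leq w_{D'}(V_{p,D'}) \leq e^{\eps} w_D(V_{p-1,D})$, and then show the defining inequality at level $k$ for one database implies it at level $k-1$ for the neighbor, so that $|k^*_D - k^*_{D'}| \leq 1$. One caution: your ``by symmetry'' is doing real work, since add and remove are not symmetric in which interlacing inequality applies; the paper spells out both directions explicitly, and you should too. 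Part 1 is otherwise sound.

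The gap is in Part 2, and it is exactly the bookkeeping you flag as ``the main obstacle.'' The paper does not re-derive the indistinguishability of $A_t$; it invokes Lemma 3.8 of BGSUZ21 wholesale and changes a single inequality in that proof, replacing the denominator lower bound by $w_z(V_{t-1,z}) \geq w_x(V_{t+k-1,x}) \cdot e^{-k\eps}$ --- note the normalizer starts at depth index $t-1$ in $z$ and shifts by $k$ steps (the Hamming distance from $x$ to $z$), landing precisely on the index $t+k-1$ that appears in the hypothesis. Your reconstruction instead takes the normalizer to be $w_{z_-}(V_{t,z_-})$ for the neighbor pair $z \sim z'$, and translates it to $D$ across $k+1$ steps of interlacing (distance $k$ to $z$ plus one more to $z'$). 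The crude interlacing then yields $w_{z_-}(V_{t,z_-}) \geq e^{-(k+1)\eps} w_D(V_{t+k+1,D})$, not $e^{-(k+1)\eps} w_D(V_{t+k-1,D})$ as you assert: shifting the starting index $t$ by $k+1$ lands at $t+k+1$, and since $w_D(V_{t+k+1,D}) \leq w_D(V_{t+k-1,D})$, your claimed bound is strictly stronger than what the interlacing provides. Consequently your additive term is bounded by a quantity proportional to $V_{t-k-1,D}/w_D(V_{t+k+1,D})$, which the hypothesis does not control. To close this you must either start the normalizer at depth $t-1$ as the cited lemma does, or otherwise account for the two-index discrepancy; as written, the final ``which the hypothesis bounds by $\delta$'' does not follow. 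The trivial-case remark (that $k^* = -1$ makes item 2 vacuous) is also needed and is absent from your sketch.
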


The new result therefore replaces the denominator $V_{t+k+g+1, D} \cdot e^{\eps g / 2}$ with denominator $\frac{w_D(V_{t+k-1, D})}{e^{\eps(t+k+1)}}$. Every point in $V_{t+k-1,D}$ of depth at least $t+k+g+1$ has score at least $t+k+g+1$, so $\frac{w_D(V_{t+k-1, D})}{e^{\eps(t+k+1)}} \geq V_{t+k+g+1, D} \cdot e^{\eps g}$, so $V_{t+k+g+1, D} \cdot e^{\eps g / 2} \leq \frac{w_D(V_{t+k+g+1, D})}{e^{\eps(t+k+1)}}$
and the check for the new result is no harder to pass. To see that it may be easier, note that only the new result takes advantage of the higher scores of deeper points in $V_{t+k-1, D}$.

\begin{proof}[Proof of \cref{lem:ptr_new}]
First we prove item 1.
Let $D$ and $D'$ be any neighboring databases and suppose WLOG that $D' = D \cup \{x\}$.
We want to show that $|M(D) - M(D')| \leq 1$.

First we prove relationships between the points with approximate Tukey depth at least $p$ and $p-1$ in datasets $D$ and $D'$. 
From the definition of approximate Tukey depth, together with the fact that $D'$ contains one additional point, for any point $y$, we are guaranteed that $\tilde T_D(y) \leq \tilde T_{D'}(y) \leq \tilde T_D(y) + 1$.
Recall that $S_{p,D}$ is the set of points with approximate Tukey depth at least $p$ in $D$.
This implies that $S_{p+1,D} \subset S_{p, D}$.
Next, since for every point $y$ we have $\tilde T_{D'}(y) \geq \tilde T_D(y)$, we have that $S_{p,D} \subset S_{p,D'}$.
Finally, since $\tilde T_D(y) \geq \tilde T_{D'}(y) - 1$, we have that $S_{p,D'} \subset S_{p-1,D}$. 
Taken together, we have
\[
    S_{p+1,D} \subset S_{p, D} \subset S_{p, D'} \subset S_{p-1, D}.
\]
It follows that
\begin{equation}
\label{eq:Vs}
    V_{p+1, D} \leq V_{p, D} \leq V_{p, D'} \leq V_{p-1, D}.
\end{equation}
Next, since the unnormalized exponential mechanism density $y \mapsto \exp(y \hat T_D(y))$ is non-negative, we have that $w_D(V_{p+1,D}) = \int_{S_{p+1,D}} \exp(\epsilon \tilde T_D(y)) \, dy \leq \int_{S_{p,D}} \exp(\epsilon \tilde T_D(y)) \, dy = w_D(V_{p,D})$. Using the fact that $\tilde T_{D}(y) \leq \tilde T_{D'}(y)$, we have that $w_D(V_{p,D}) = \int_{S_{p,D}} \exp(\epsilon \tilde T_D(y)) \, dy \leq \int_{S_{p,D'}} \exp(\epsilon \tilde T_{D'}(y)) \, dy = w_{D'}(V_{p,D'})$.
Finally, using the fact that $\tilde T_{D'}(y) \leq \tilde T_D(y) + 1$, we have $w_{D'}(V_{p,D'}) = \int_{S_{p,D'}} \exp(\eps \tilde T_{D'}(y)) \, dy \leq \int_{S_{p-1,D}} \exp(\epsilon \tilde T_{D}(y) +\epsilon) \, dy = e^\epsilon w_{D}(V_{p-1,D})$. Together, this gives
\begin{equation}
\label{eq:Ws}
    w_D(V_{p+1,D}) \leq w_D(V_{p,D}) \leq w_{D'}(V_{p,D'}) \leq w_D(V_{p-1,D}) \cdot e^\eps.
\end{equation}



Now suppose there exists $k^*_{D'} \geq 0$ such that
$\tfrac{V_{t-k^*_{D'}-1,D'}}{w_{D'}(V_{t+k^*_{D'}+2,D'})} \cdot e^{\eps( t+k^*_{D'}+1)} \leq \delta$. Then by \cref{eq:Vs}, $V_{t-k^*_{D'}-1,D'} \geq V_{t-k^*_{D'},D}$, and by \cref{eq:Ws} $w_{D'}(V_{t+k^*_{D'}+2,D'}) \leq w_D(V_{t+k^*_{D'}+1,D}) \cdot e^\eps$, so $\tfrac{V_{t-k^*_{D'},D}}{w_D(V_{t+k^*_{D'}+1,D})} \cdot e^{\eps(t+k^*_{D'})} \leq \delta$ and then $k^*_D \geq k^*_{D'} -1$. Similarly, if there exists $k^*_D \geq 0$ such that
$\tfrac{V_{t-k^*_D-1,D}}{w_D(V_{t+k^*_D+2,D)}} \cdot e^{\eps(t+k^*_D+1)} \leq \delta$, then by \cref{eq:Vs} $V_{t-k^*_D-1,D} \geq V_{t-k^*_D,D'}$, and by \cref{eq:Ws} $w_D(V_{t+k^*_D+2,D}) \leq w_{D'}(V_{t+k^*_D+2,D'})$, so $\tfrac{V_{t-k^*_D,D'})}{w_{D'}(V_{t+k^*_x+2,D'})} \cdot e^{\eps(t+k^*_{D'})} \leq \delta$, and $k^*_{D'} \geq k^*_D -1$. Thus if $k^*_D \geq 0$ or $k^*_{D'} \geq 0$, $|k^*_D - k^*_{D'}| \leq 1$. The result then follows since $k^* \geq -1$.

As in \cref{lem:ptr_original}, item 2 is a consequence of Lemma 3.8 from~\citet{BGSUZ21} and the fact that $k^*=-1$ is a trivial lower bound on distance. The only change made to the proof of Lemma 3.8 of~\citet{BGSUZ21} is, in its notation, to replace its denominator lower bound with
    \[
        w_z(V_{t-1,z}) \geq w_x(V_{t+k-1,x}) \cdot e^{-k\eps}
    \]
This uses the fact that $x$ and $z$ differ in at most the addition or removal of $k$ data points. Thus $V_{t+k-1,x} \leq V_{t-1,z}$, and no point's score increases by more than $k$ from $V_{t-1,z}$ to $V_{t+k-1,x}$. Since their numerator upper bound is $V_{t-k-1, x} \cdot e^{\eps(t+1)}$ (note that the 2 is dropped here because approximate Tukey depth is monotonic; see Section 7.3 of~\cite{AJRV23} for details), the result follows.
\end{proof}

\arxiv{\newpage}

\section{Extended Experiment Results}
\label{sec:extended_results}
\begin{figure}[ht]
\begin{center}
\begin{tabular}{|c|c|c|c|c|c|c|c|}
\hline
Task ID & $\nondp$ & $\bas$ & $\tukey$ & $\lprefix\bas$ & $\lprefix\tukey$ & $\kprefix\bas$ & $\kprefix\tukey$ \\
\hline
\hline
361072 & 7.3e-01 & \textbf{7.8e-01} & $-\infty$ & -2.7e-02 & 1.0e-01 & \textbf{2.1e-01} & 1.0e-01 \\ 
\hline
361073 & 4.6e-01 & -3.4e+00 & $-\infty$ & -4.6e-01 & \textbf{-2.3e-02} & -4.8e-01 & \textbf{-4.3e-01} \\ 
\hline
361074 & 8.0e-01 & -4.5e+04 & -6.6e+02 & -8.2e+02 & \textbf{-1.7e+02} & -5.6e+05 & \textbf{3.1e-01} \\ 
\hline
361075 & 6.2e-01 & -3.6e+02 & $-\infty$ & 3.2e-03 & \textbf{2.2e-02} & 5.2e-03 & \textbf{7.1e-02} \\ 
\hline
361076 & 2.8e-01 & -3.4e+00 & $-\infty$ & -5.6e-01 & \textbf{-2.5e-01} & -4.3e+00 & \textbf{8.5e-02} \\ 
\hline
361077 & 8.2e-01 & -2.1e+08 & $-\infty$ & -1.0e+07 & \textbf{3.7e-01} & -1.9e+08 & \textbf{6.6e-01} \\ 
\hline
361078 & 6.5e-01 & -1.1e+03 & -6.4e+00 & \textbf{-7.4e-01} & -1.2e+00 & -6.1e+02 & \textbf{-9.5e-01} \\ 
\hline
361079 & 2.6e-01 & -5.4e+00 & -1.0e+01 & -1.3e+01 & \textbf{-2.e+00} & -9.4e+00 & \textbf{-6.8e-01} \\ 
\hline
361085 & 3.3e-01 & -2.e+01 & \textbf{3.3e-01} & -1.7e+01 & 2.8e-01 & -1.3e+01 & \textbf{3.7e-01} \\ 
\hline
361087 & 7.2e-01 & -3.1e+03 & -8.8e+04 & -4.7e+00 & -1.9e+02 & \textbf{-2.8e+00} & \textbf{6.6e-01} \\ 
\hline
361088 & 7.3e-01 & 5.4e-02 & $-\infty$ & 1.4e-01 & \textbf{3.e-01} & 2.1e-01 & \textbf{3.6e-01} \\ 
\hline
361089 & 6.2e-01 & -3.e+00 & -1.5e+01 & \textbf{-1.3e+00} & -5.9e+00 & -5.9e+04 & \textbf{-2.6e-01} \\ 
\hline
361091 & 2.4e-01 & -3.0e+04 & -1.9e+00 & -3.0e+04 & \textbf{3.8e-02} & -3.1e+04 & \textbf{4.9e-02} \\ 
\hline
361092 & 2.0e-02 & -4.7e+05 & $-\infty$ & \textbf{-4.1e+04} & -1.4e+08 & \textbf{-1.3e+05} & -9.3e+08 \\ 
\hline
361093 & 4.3e-01 & -5.6e-02 & $-\infty$ & \textbf{-4.6e-02} & $-\infty$ & \textbf{-4.9e-02} & $-\infty$ \\ 
\hline
361094 & 8.3e-01 & \textbf{4.2e-01} & -3.8e+05 & \textbf{-1.0e+00} & -4.2e+05 & -2.4e+00 & -3.0e+05 \\ 
\hline
361095 & 2.2e-01 & -1.0e+00 & -1.2e+09 & -8.0e-01 & \textbf{2.e-01} & 1.5e-01 & \textbf{2.1e-01} \\ 
\hline
361096 & 9.7e-01 & -9.1e+02 & -1.e+08 & -4.9e+00 & \textbf{9.1e-01} & 7.1e-01 & \textbf{8.8e-01} \\ 
\hline
361098 & 8.6e-01 & -2.9e+01 & $-\infty$ & -3.9e+00 & \textbf{-3.1e+00} & \textbf{-2.2e-01} & -5.6e+00 \\ 
\hline
361099 & 4.0e-01 & -4.4e-01 & -4.8e+10 & -4.8e-01 & \textbf{-9.8e-03} & -4.2e-01 & \textbf{3.2e-01} \\ 
\hline
361100 & 1.2e-01 & -6.3e+01 & $-\infty$ & -1.1e+00 & \textbf{-2.8e-02} & -4.6e-01 & \textbf{-1.7e-01} \\ 
\hline
361101 & 3.3e-01 & -9.3e+01 & -5.9e+08 & \textbf{3.6e-01} & 2.e-01 & \textbf{3.0e-01} & 2.1e-01 \\ 
\hline
361102 & 7.6e-01 & -4.1e+02 & -1.4e+15 & \textbf{6.8e-02} & \textbf{-2.6e-01} & -4.0e+00 & -5.3e+00 \\ 
\hline
361103 & 5.3e-01 & 4.3e-01 & -6.3e+07 & \textbf{5.e-01} & \textbf{4.9e-01} & 3.3e-01 & 4.8e-01 \\ 
\hline
361104 & 6.8e-01 & -3.7e+03 & -6.4e+07 & -1.5e+02 & -2.6e+05 & \textbf{-8.3e-01} & \textbf{-2.9e+01} \\ 
\hline
\end{tabular}
\end{center}
\caption{$R^2$ values for $k=5$. Each entry is the median test value from 10 trials. The top two private values for each dataset are bolded.}
\label{fig:results_k_5}
\end{figure}

\begin{figure}[ht]
\begin{center}
\begin{tabular}{|c|c|c|c|c|c|c|c|}
\hline
Task ID & $\nondp$ & $\bas$ & $\tukey$ & $\lprefix\bas$ & $\lprefix\tukey$ & $\kprefix\bas$ & $\kprefix\tukey$ \\
\hline
\hline
361072 & 7.4e-01 & \textbf{5.3e-01} & $-\infty$ & 3.5e-01 & 1.8e-01 & \textbf{7.3e-01} & 8.2e-02 \\ 
\hline
361073 & 4.6e-01 & -8.2e-01 & $-\infty$ & \textbf{-4.4e-01} & -1.7e+00 & \textbf{-4.8e-01} & -1.6e+01 \\ 
\hline
361074 & 8.1e-01 & -2.3e+06 & -1.3e+03 & -5.6e+06 & \textbf{-3.4e+02} & -7.2e+04 & \textbf{-1.4e+02} \\ 
\hline
361075 & 6.2e-01 & -1.7e+02 & $-\infty$ & \textbf{7.5e-02} & -9.7e-01 & \textbf{3.1e-02} & -2.8e+00 \\ 
\hline
361076 & 2.8e-01 & -2.4e+02 & -8.2e+04 & \textbf{-2.7e+01} & -3.5e+03 & \textbf{-2.4e+01} & -1.4e+03 \\ 
\hline
361077 & 7.9e-01 & -1.1e+08 & $-\infty$ & -1.4e+07 & \textbf{-3.1e+01} & -2.1e+08 & \textbf{-1.7e+02} \\ 
\hline
361079 & 2.4e-01 & -1.8e+00 & -3.7e+01 & -1.2e+01 & \textbf{-8.1e-01} & -4.1e+00 & \textbf{-1.5e+00} \\ 
\hline
361087 & 7.1e-01 & -3.7e+01 & -1.5e+04 & \textbf{-7.2e+00} & -4.8e+03 & -8.5e+01 & \textbf{-2.e+01} \\ 
\hline
361088 & 7.3e-01 & -1.9e+00 & $-\infty$ & \textbf{1.2e-01} & 9.2e-02 & \textbf{1.2e-01} & 7.1e-02 \\ 
\hline
361091 & 2.4e-01 & -3.0e+04 & -3.4e+00 & -3.0e+04 & \textbf{6.8e-02} & -3.1e+04 & \textbf{5.7e-02} \\ 
\hline
361092 & 4.0e-02 & -2.4e+06 & $-\infty$ & \textbf{-8.7e+05} & -7.6e+09 & \textbf{-7.5e+05} & -3.7e+10 \\ 
\hline
361093 & 4.3e-01 & \textbf{-3.2e-02} & $-\infty$ & -5.e-01 & $-\infty$ & \textbf{-1.1e-01} & $-\infty$ \\ 
\hline
361095 & 2.2e-01 & -1.9e+01 & -1.2e+09 & \textbf{-8.6e-01} & -2.4e+07 & -1.4e+00 & \textbf{2.1e-01} \\ 
\hline
361096 & 9.8e-01 & -3.7e+02 & -1.0e+08 & -3.4e+03 & \textbf{-1.8e+00} & -1.0e+03 & \textbf{4.2e-01} \\ 
\hline
361098 & 8.6e-01 & -1.8e+02 & $-\infty$ & \textbf{-4.1e-01} & -1.3e+07 & -9.5e-01 & \textbf{-5.8e-01} \\ 
\hline
361099 & 4.0e-01 & \textbf{-4.3e-01} & -2.3e+10 & -4.9e-01 & -2.4e+09 & \textbf{-4.8e-01} & -2.5e+08 \\ 
\hline
361100 & 1.2e-01 & -9.2e+01 & $-\infty$ & -1.6e+00 & \textbf{-8.2e-02} & -3.5e+00 & \textbf{-5.2e-01} \\ 
\hline
361101 & 3.4e-01 & -3.3e+03 & -1.7e+08 & \textbf{-8.1e-01} & -1.4e+02 & \textbf{-9.5e-01} & -4.7e+05 \\ 
\hline
361102 & 7.5e-01 & -9.5e+02 & -1.7e+15 & \textbf{-1.3e+00} & -1.5e+15 & \textbf{-2.5e+01} & -1.4e+05 \\ 
\hline
361103 & 5.4e-01 & -2.6e+00 & -1.2e+08 & 4.9e-01 & \textbf{5.1e-01} & 3.5e-01 & \textbf{4.9e-01} \\ 
\hline
361104 & 6.8e-01 & \textbf{-9.e+02} & -1.0e+08 & -2.9e+03 & -3.5e+07 & \textbf{-1.4e+03} & -6.6e+05 \\ 
\hline
\end{tabular}
\end{center}
\caption{This figure records the same information as \cref{fig:results_k_5}, but for $k=10$.}
\label{fig:results_k_10}
\end{figure}

\end{document}